\documentclass[lettersize,journal]{IEEEtran}
\usepackage{amsmath,amsfonts}
\usepackage{algorithmic}
\usepackage{algorithm}
\usepackage{array}
\usepackage[caption=false,font=footnotesize]{subfig}
\usepackage{textcomp}
\usepackage{stfloats}
\usepackage{url}
\usepackage{verbatim}
\usepackage{graphicx}
\usepackage{cite}
\usepackage{multirow}
\usepackage{booktabs}
\usepackage{placeins}

\usepackage{amsthm}
\theoremstyle{plain}
\newtheorem{theorem}{Theorem}[section]
\newtheorem{corollary}[theorem]{Corollary}
\newtheorem{lemma}{Lemma}[section]

\newtheorem{proposition}[theorem]{Proposition}

\theoremstyle{definition}
\newtheorem{assumption}{Assumption}[section]

\theoremstyle{remark}
\newtheorem{remark}{Remark}[section]

\begin{document}
\title{SleepWalking: Privileged Representation Shaping for End-to-End Blind Locomotion in Legged Robots}

\author{
	Zheng Pan, Tenghui Wang, Peilin Li, Shiyu Zhou, Hao Sun, Yan Ma, Liang Yu, Liang He
	\thanks{Zheng Pan, Hao Sun, Yan Ma, Liang Yu, and Liang He are with Northwestern Polytechnical University, Xi'an 710072, China.}
	\thanks{Tenghui Wang and Shiyu Zhou are with Shanghai Jiao Tong University, Shanghai 200240, China.}
	\thanks{Peilin Li is with Yunmu Intelligent Manufacturing Co., Ltd., Jiangsu 215400, China.}
	\thanks{Co-corresponding authors are Liang Yu (liang.yu@nwpu.edu.cn) and Liang He (2021050018@nwpu.edu.cn)}
}

\maketitle

\begin{abstract}
Partially observable locomotion requires a policy to act when task-relevant properties of the robot--environment state are not fully specified by instantaneous observations. Existing approaches often address this challenge by explicitly estimating missing physical variables or processing extended observation histories through structured architectures.
We take a different view: partial observability is fundamentally an information-retention problem. The decisive question is not how task-relevant information enters the network, but whether the policy's internal state retains it. Guided by this perspective, we propose SleepWalking for Robot Locomotion (SWAQ), a one-stage end-to-end framework that uses next-step privileged physical reconstruction to shape what a recurrent history representation retains during policy learning, while the deployed actor uses only a direct history-to-action pathway.
Under aligned training settings, SWAQ achieves a 15.0\% higher peak mean terrain level than DWAQ, the strongest non-exteroceptive baseline, while using 44.4\% fewer inference MACs per control step. Layerwise probes further show that information associated with the reconstructed physical variables remains linearly decodable through the policy head up to the layer preceding the action output. Complementary theoretical analysis relates privileged-variable recoverability to the achievable-return gap between history-based and privileged-information policy classes.
These results suggest that semantic objectives can structure learning without requiring a corresponding architectural decomposition of the deployed controller.
\end{abstract}

\begin{IEEEkeywords}
Blind legged locomotion, deep reinforcement learning, representation learning, auxiliary tasks, privileged information.
\end{IEEEkeywords}

\section{Introduction}

\IEEEPARstart{L}{egged}
robots can use discrete contacts to traverse obstacles and discontinuous terrain, making them suitable for operation in human-centered and unstructured environments. Model-based control established locomotion over rough terrain and
stairs~\cite{bellicoso2018,fankhauser2018,qi2021,grandia2023}, while deep reinforcement learning (RL) substantially expanded the agility and robustness
of legged locomotion~\cite{hwangbo2019,lee2019,tsounis2020,lee2020,rudin2022b}.

Terrain information is important for anticipating stairs, gaps, obstacles, and feasible footholds. Some learning-based controllers combining exteroception with proprioception have achieved agile locomotion in natural environments and dynamic parkour~\cite{acero2022,miki2022,duan2024,cheng2024a}. However, exteroceptive measurements may be noisy, incomplete, or unavailable, and they increase the sensing and processing requirements of the control system \cite{nahrendra2026}. Removing this sensing channel leaves task-relevant terrain properties unavailable before interaction and observable only indirectly through the robot's physical response.

Incomplete access to task-relevant state is naturally modeled as partial observability, which arises broadly in robot localization, navigation, manipulation, tracking, and interaction under uncertain sensing and dynamics~\cite{kaelbling1998a,kurniawati2022,lauri2023a}. Different underlying robot-environment states can produce similar instantaneous observations while requiring different actions. A controller must therefore use its observation and action history to distinguish latent situations that cannot be resolved at a single time step. For deep RL, task return provides only indirect supervision for the learned representation; successful behavior alone does not reveal whether the policy has retained information governing future interactions or merely exploited correlations encountered during training~\cite{stooke2021}.

To study this mechanism, we choose blind locomotion as a controlled setting in which terrain observations are completely removed from the policy. Although terrain geometry is not directly measured, physical interaction produces proprioceptive evidence that can accumulate over time. Human stair negotiation provides a useful analogy: short-term memory of step geometry can support subsequent stepping after the initial interaction~\cite{graci2017}. History-conditioned policies have similarly enabled quadrupedal locomotion over challenging terrain, bipedal stair traversal, and one-stage blind locomotion~\cite{lee2020,siekmann2021c,aswinnahrendra2023}. Together, these studies show that interaction history can support locomotion without direct terrain observations.
Modular pipelines facilitate separate supervision and optimization of individual components, but a predefined intermediate estimate may also restrict the information available to downstream control. This matters under partial observability because noisy or incomplete observations can remain informative when their reliability is inferred from context~\cite{miki2022,radosavovic2024a}.

The central question is therefore whether missing physical variables must be explicitly estimated and supplied to the controller, or whether making them recoverable from its internal history representation during training is sufficient to shape a useful decision state.
To explore the latter possibility, we propose \textbf{SWAQ} (\textbf{S}leep\textbf{WA}lking for Robot Locomotion), a one-stage end-to-end asymmetric actor-critic framework. The name is inspired by somnambulism, a non-rapid-eye-movement parasomnia in which complex motor behavior can proceed under reduced awareness and sleepwalkers may navigate familiar surroundings~\cite{zadra2013}. Guided by this analogy, SWAQ jointly learns the control policy and a recurrent representation of the nonprivileged action-observation history, while state-and-terrain reconstruction shapes this representation during training. This auxiliary objective makes physical and terrain variables recoverable from the shared representation without routing reconstructed quantities into the controller. In this setting, SWAQ instantiates a broader architectural principle: semantic task decomposition can be imposed through training objectives rather than mandatory runtime interfaces.

The main contributions of this work are summarized as follows:

\begin{enumerate}
	\item  We formulate and study an information-retention view of history-based robot control under partial observability: the central design object is the task-relevant information retained in the policy's internal state, rather than the architectural route by which that information is carried. Under this view, semantic supervision can structure learning without prescribing a corresponding architectural decomposition at deployment.
	
	\item We develop SWAQ and provide supporting evidence for this view at both the representation and policy levels. Selected robot-state and terrain information remains linearly decodable near the action output, while aligned comparisons show improved terrain progression and lower inference cost than the strongest non-exteroceptive baseline.
	
	\item We theoretically analyze SWAQ's privileged reconstruction objective by connecting reconstruction accuracy to representation-level information retention and, under explicit assumptions, characterizing privileged-policy emulation and the corresponding achievable-return gap.
	
\end{enumerate}

\section{Related Work}

\subsection{Proprioception-Based Blind Locomotion}

Blind locomotion seeks to traverse uneven terrain without exteroceptive measurements or an explicit terrain map. Since local geometry and contact conditions are not directly observable, policies must extract useful cues from proprioceptive feedback and the resulting robot-environment
interactions. In \cite{lee2020}, a temporal convolutional network was trained through privileged teacher-student learning to infer terrain-relevant information from a history of proprioceptive observations, enabling a quadruped to traverse challenging natural terrain. In \cite{siekmann2021c}, stair-like terrain randomization was incorporated into sim-to-real reinforcement learning, allowing a bipedal robot to traverse real-world stairs using only proprioceptive feedback and without an explicit terrain model. These results establish that interaction histories can support robust locomotion even when the terrain is not directly perceived.

Subsequent work has expanded the range of blind locomotion skills while introducing different mechanisms for handling terrain-dependent behavior. In \cite{chamorro2024}, a shared policy was conditioned on a binary terrain indicator that activates or deactivates a stair-climbing mode. This conditioning enables distinct behaviors for stairs and regular terrain, but requires the terrain mode to be specified during deployment rather than inferred entirely from proprioceptive history. In DreamWaQ (DWAQ)~\cite{aswinnahrendra2023}, the teacher-student procedure was replaced by a one-stage asymmetric actor-critic framework equipped with a context-aided estimator. The estimator extracts a latent terrain context from proprioceptive history and estimates the robot velocity, both of which are supplied to the control policy during deployment. These methods condition deployed control on either an externally specified terrain mode or an estimated latent variable.

Privileged teacher-student methods first optimize a policy with access to simulator-only information and subsequently train a deployable student or adaptation module~\cite{lee2020,kumar2021a}. This sequential procedure introduces additional training stages, while behavioral cloning or latent regression may restrict the student to information and behaviors represented by the teacher~\cite{aswinnahrendra2023,luo2024}. Distribution shift may also arise because the deployed student visits states not adequately covered by teacher-generated trajectories.

One-stage methods alleviate this training separation, but estimator-conditioned architectures may still place an explicit intermediate interface in the control pathway at deployment. Errors or information discarded by this interface can propagate to the downstream policy.

\subsection{History and Auxiliary Representation Learning for Control}

Reinforcement-learning objectives provide task-directed supervision, but the reward signal alone may be insufficient to induce a representation that preserves the latent dynamics and environmental factors relevant to control. Auxiliary objectives provide representation-level supervision beyond the task reward and have been used to shape shared policy encoders~\cite{jaderberg}. Temporally structured self-supervision can further produce representations suitable for downstream control \cite{stooke2021}. These results motivate auxiliary representation learning, but do not determine whether recovered physical variables should become explicit inputs to a deployed locomotion policy.

In legged locomotion, history-dependent representations are commonly used to recover physical quantities that are unavailable from an instantaneous observation. In \cite{kumar2021a}, an adaptation module maps a recent state-action history to a latent environment vector that is consumed by the base policy during deployment. 
In \cite{aswinnahrendra2023}, a context-aided estimator is trained through velocity estimation and next-observation reconstruction, and supplies the resulting velocity estimate and latent terrain context to the deployed locomotion policy. These approaches explicitly expose the estimated latent variables to the control pathway. A different design is adopted in the concurrent perceptive-locomotion work \cite{chen}, where an auxiliary decoder reconstructs a privileged terrain height scan from the hidden state of a recurrent depth policy. The decoder is removed after training, allowing the recurrent policy to retain terrain information without explicitly reconstructing the height scan during deployment. However, the policy is obtained through a multi-stage pipeline involving privileged teachers, multi-skill distillation, and subsequent fine-tuning, and its deployed observation includes exteroceptive depth measurements. These methods differ in whether latent estimates are consumed at deployment, whether exteroception is required, and whether policy and representation learning are performed in a single optimization stage. Within this design space, SWAQ jointly optimizes the policy and history representation in one stage, using privileged reconstruction targets to supervise the representation without feeding reconstructed quantities to the proprioception-only deployed actor.

Attention provides a complementary mechanism for selecting task-relevant information. He et al.~\cite{he2025b} use a proprioceptive embedding as the query and point-wise terrain features as keys and values, allowing the map encoder to attend to state-dependent terrain regions associated with future footholds. Radosavovic et al.~\cite{radosavovic2024a} instead apply causal self-attention to observation-action tokens to select temporal evidence relevant to gait, contact, and in-context adaptation. These approaches perform spatial and temporal information selection, respectively, complementing auxiliary objectives that explicitly shape the information retained in learned representations.

At a larger scale, recent world action models jointly model future visual observations and actions to acquire action-conditioned representations of physical dynamics \cite{ye2026}. Notably, \cite{yuan2026} reports that video co-training can retain much of its control benefit even when explicit future generation is removed at inference, suggesting that predictive objectives may be valuable primarily for shaping the latent representation during training.

\section{Method}
\label{section:Method}

\subsection{Problem Formulation}

\subsubsection{Notation}

Table~\ref{table:Notation} summarizes the core notation used across the method and theoretical analysis. Symbols that are local to a particular derivation are defined at their first occurrence.

\begin{table}[htbp]
	\centering
	\caption{Core Notation Used Throughout This Paper.}
	\label{table:Notation}
	\renewcommand{\arraystretch}{1.08}
	\begin{tabular}{
			@{}>{\raggedright\arraybackslash}p{0.2\columnwidth}>{\raggedright\arraybackslash}p{0.75\columnwidth}@{}
		}
		\toprule
		\textbf{Symbol}  & \textbf{Definition} \\
		
		\midrule
		$\mathcal P$& Original POMDP. \\
		
		$\mathcal S,\mathcal A,\mathcal O$
		& Latent-state, action, and policy-observation spaces, respectively. \\
		
		$s_t,\boldsymbol a_t,\boldsymbol o_t$
		& Latent robot-environment state, control action, and policy observation at time $t$, respectively. \\
		
		$P,\Omega$
		& Latent-state transition kernel and observation kernel of $\mathcal P$, respectively. \\

		$r$; $r_{\max}$
		& One-step reward and its uniform absolute bound used in the theoretical analysis, respectively. \\
		
		$\tau_t\in\mathcal T$
		& Nonprivileged action-observation history available to the deployed policy at time $t$. \\
		
		$\mathcal M_{\mathcal T}$
		& Exact MDP induced by taking the complete nonprivileged history as the state. \\
		
		$\overline P_{\mathcal T},
		\overline r_{\mathcal T}$
		& Transition kernel and expected one-step reward of $\mathcal M_{\mathcal T}$, respectively. \\
		
		$\boldsymbol H_t\in\mathcal H$; $\boldsymbol h_t$
		& Random shared history representation produced by the recurrent encoder and shared trunk, and its realized value at time $t$, respectively. \\
		
		$X_k\in\mathcal X$
		& Common augmented pre-action Markov state $(s_k,\tau_k,\boldsymbol H_{k-1},\boldsymbol Y_k)$ used only for the theoretical analysis. \\
		
		$\boldsymbol O_k\in\mathcal O$
		& Current nonprivileged policy observation at decision epoch $k$; it excludes the preceding history representation $\boldsymbol H_{k-1}$. \\
		
		$\boldsymbol Y_k\in\mathcal Y$; $\widehat{\boldsymbol Y}_k$
		& Privileged target available at decision epoch $k$ and its reconstruction from $\boldsymbol H_{k-1}$, respectively, where $k=t+1$ for the transition indexed by $t$. \\
		
		$g_{\theta_{\mathrm{rec}}}$; $g$
		& Implemented reconstruction mapping and its parameter-suppressed theoretical counterpart, respectively. \\

		$\boldsymbol W$
		& Fixed diagonal normalization matrix defining the weighted reconstruction distortion. \\

		$\mu$
		& Population reconstruction-training distribution over the temporally aligned tuple $(\boldsymbol O,\boldsymbol H,\boldsymbol Y)$. \\
		
		$\Gamma$
		& Measurable projection from $X_k\in\mathcal X$ to the temporally aligned tuple $(\boldsymbol O_k,\boldsymbol H_{k-1},\boldsymbol Y_k)$. \\
		
		$\gamma$; $J(\pi)$; $J^{*}(\Pi)$; $J_{\mathcal I}^{*}$
		& Discount factor, expected discounted return of policy $\pi$, optimal achievable return over policy class $\Pi$, and its shorthand for policies conditioned on information set $\mathcal I$, respectively. \\
		\bottomrule
	\end{tabular}

\end{table}

\subsubsection{POMDP and History-State Formulation}
\label{sec:pomdp_history_formulation}

Blind legged locomotion is naturally partially observable because the controller must make decisions without direct exteroceptive measurements of the surrounding terrain. We therefore formulate the control problem as a partially observable Markov decision process (POMDP), a standard framework for robot decision making under incomplete and uncertain state information~\cite{lauri2023a}. The POMDP is defined as
\begin{equation}
	\mathcal P
	=
	\left\langle
	\mathcal S,
	\mathcal A,
	\mathcal O,
	P,
	\Omega,
	r,
	\rho_0,
	\gamma
	\right\rangle,
	\label{eq:pomdp_definition}
\end{equation}
where $\mathcal S$, $\mathcal A$, and $\mathcal O$ denote the latent state, action, and observation spaces, respectively; $P(\mathrm d s'\mid s,\boldsymbol a)$ is the state-transition kernel; $\Omega(\mathrm d\boldsymbol o'\mid s',\boldsymbol a)$ is the observation kernel; $r(s,\boldsymbol a,s')$ is the one-step reward; $\rho_0$ is the initial-state distribution; and $\gamma\in[0,1)$ is the discount factor.

In blind locomotion, the latent state $s_t$ contains both the complete robot state and task-relevant environmental variables. By contrast, the policy receives only the observation $\boldsymbol o_t$, which is primarily composed of proprioceptive measurements and does not directly reveal the local terrain geometry or other latent robot-environment interaction conditions. Consequently, different latent states may produce similar instantaneous observations. For example, similar joint configurations can occur immediately before contact with flat ground, a stair edge, or an irregular foothold, although these situations require different control responses. Thus, the instantaneous observation $\boldsymbol o_t$ is generally not a Markov state for the deployed policy.

Define the complete nonprivileged action-observation history as
\begin{equation}
	\tau_t
	:=
	(\boldsymbol o_0,\boldsymbol a_0,\boldsymbol o_1,\ldots,\boldsymbol a_{t-1},\boldsymbol o_t)
	\in\mathcal T.
	\label{eq:nonprivileged_history}
\end{equation}

Following the history-state construction for POMDPs~\cite{nguyen2021}, taking $\tau_t$ as the state induces the MDP
\begin{equation}
	\mathcal M_{\mathcal T}
	=
	\left\langle
	\mathcal T,
	\mathcal A,
	\overline P_{\mathcal T},
	\overline r_{\mathcal T},
	\rho_{\mathcal T,0},
	\gamma
	\right\rangle,
	\label{eq:history_induced_mdp}
\end{equation}

For every measurable $B\subseteq\mathcal T$, its transition kernel and expected one-step reward are
\begin{align*}
	\overline P_{\mathcal T}(B\mid\tau,\boldsymbol a)
	&:=
	\Pr(\tau_{t+1}\in B\mid\tau_t=\tau,\boldsymbol a_t=\boldsymbol a),\\
	\overline r_{\mathcal T}(\tau,\boldsymbol a)
	&:=
	\mathbb E[r_t\mid\tau_t=\tau,\boldsymbol a_t=\boldsymbol a],
\end{align*}
and $\rho_{\mathcal T, 0}$ is the initial distribution over histories.

Let $\Pi_{\mathrm{hist}}$ denote the class of admissible history-dependent policies, where $\boldsymbol a_t\sim\pi(\cdot\mid\tau_t)$ for $\pi\in\Pi_{\mathrm{hist}}$. The discounted return of a policy $\pi$ is defined as
\begin{equation}
	J(\pi)
	:=
	\mathbb E_{\pi}
	\left[
	\sum_{t=0}^{\infty}
	\gamma^t r_t
	\right],
	\label{eq:discounted_return}
\end{equation}
where $r_t:=r(s_t,\boldsymbol a_t,s_{t+1})$ and the expectation is taken over the trajectories induced by the initial-state distribution, transition kernel, observation kernel, and policy. For any admissible policy class $\Pi$, define its optimal achievable return as
\begin{equation}
	J^{*}(\Pi)
	:=
	\sup_{\pi\in\Pi}J(\pi).
	\label{eq:optimal_achievable_return}
\end{equation}

The optimization objective is to find a history-dependent policy that maximizes the expected discounted return:
\begin{equation}
	\pi^\star
	\in
	\underset{\pi\in\Pi_{\mathrm{hist}}}{\arg\max}
	\;J(\pi).
	\label{eq:optimal_history_policy}
\end{equation}

The remaining challenge is therefore not only to compress the growing history, but to determine which information a finite-dimensional representation should preserve for control without introducing additional deployment observations.

\subsubsection{Action and Observation Spaces}

For action space, the policy action $\boldsymbol a_t\in\mathbb R^{n_j}$ specifies the relative targets for the $n_j$ actuated joints. The applied torque of each actuated joint is computed by a simple PD controller.

For observation space, the actor receives only proprioceptive observation $\boldsymbol o_t$,  comprising the base angular velocity $\boldsymbol{\omega}^{b}_t\in\mathbb R^3$, projected gravity $\boldsymbol g^{b}_t\in\mathbb R^3$, velocity command $\boldsymbol c_t\in\mathbb R^3$, relative joint positions $\boldsymbol q_t-\boldsymbol q^{0}\in\mathbb R^{n_j}$, joint velocities $\dot{\boldsymbol q}_t\in\mathbb R^{n_j}$, previous action $\boldsymbol a_{t-1}\in\mathbb R^{n_j}$, and a two-dimensional gait-phase signal $\boldsymbol{\rho}_t\in\mathbb R^2$.

\subsubsection{Reward Design}

Because this work focuses on history-representation learning rather than reward engineering, we adopt an established locomotion reward structure and apply it unchanged to SWAQ and all ablation variants.
Following the phase-based contact formulation in \cite{margolis2023}, the gait terms penalize vertical contact forces during swing and tangential foot velocities during stance, thereby encouraging the prescribed contact schedule. The remaining task-tracking and motion-regularization terms follow commonly used formulations in learning-based locomotion \cite{margolis2023,aswinnahrendra2023,sun2025a}. Table~\ref{table:reward_setting} summarizes all reward terms and their weights.

\begin{table}[htbp]
	\centering
	\caption{Reward Structure.}
	\label{table:reward_setting}
	\renewcommand{\arraystretch}{1.08}
	\begin{tabular}{
			@{}>{\raggedright\arraybackslash}p{0.18\columnwidth}
			>{\raggedright\arraybackslash}p{0.58\columnwidth}
			>{\raggedleft\arraybackslash}p{0.1\columnwidth}@{}
		}
		\toprule
		\textbf{Category} & \textbf{Reward Term} & \textbf{Weight} \\
		\midrule
		
		\multirow{2}{*}{\textbf{Task}} 
		& Track linear velocity (XY, Exp) & 5 \\
		& Track angular velocity (Z, Exp) & 5 \\
		
		\midrule
		\multirow{2}{*}{\textbf{Gait}} 
		& Feet swing phase force (Z, Exp)  & -5 \\
		& Feet stance phase velocity (XY, Exp)  & -5 \\
		
		\midrule
		
		\multirow{2}{*}{\textbf{Regulation}} 
		& Is terminated (Bool)& -200 \\
		& Body angular velocity (XY, L2)   & -0.05 \\
		& Body linear velocity (Z, L2)  & -1 \\
		& Body flat orientation (L2)  & -5 \\
		& Joints torque (L2) & -1e-5 \\
		& Joints acceleration (L2) & -1e-6 \\
		& Joints power (L1) & -1e-5 \\
		& Joints position limit (90\%, L1)  & -1 \\
		& Joints action rate (L2) & -0.01 \\
		& Feet stumble (Bool) & -2.5 \\
		& Feet slide (L2) & -0.25 \\
		& Feet distance (Y, Exp)  & -100 \\
		& Feet force too large (Z, L1)  & -2e-3 \\
		& Feet flat orientation (L2) & -2.5 \\
		& Feet height over terrain ($\ge$0.1m, L2)  & -200 \\

		\bottomrule
		
\end{tabular}
\end{table}

\subsection{SWAQ Architecture}

As depicted in Fig.~\ref{fig:Overview_of_SWAQ}, SWAQ is a one-stage end-to-end asymmetric actor-critic framework. Its actor contains a recurrent history encoder and a shared trunk followed by a policy head, which together form the deployed control pathway. A training-only reconstruction pathway branches from the shared representation and predicts privileged physical targets aligned with the next control step. Its outputs are used only to supervise the shared representation and are not routed into the policy head.

\begin{figure*}[t]
	\centering
	\includegraphics[width=7 in]{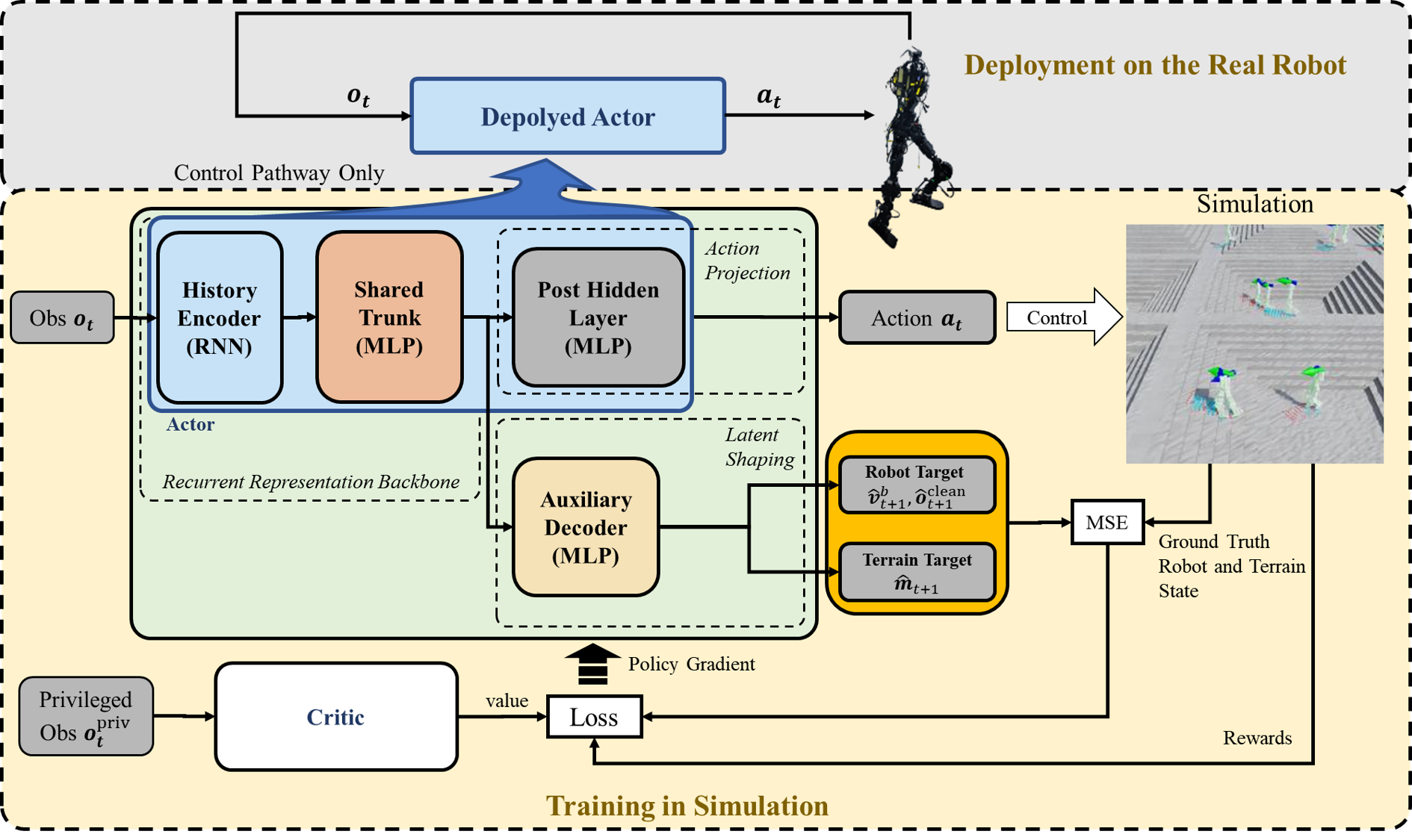}
	\caption{
		Overview of the proposed SWAQ framework. At step $t$, the actor produces $\boldsymbol a_t$, and the resulting simulator transition provides the next-step reconstruction target $\boldsymbol y_{t+1}$. Training-only next-step reconstruction shapes the shared history representation during policy learning, whereas the deployed actor maps nonprivileged history directly to actions without consuming any reconstructed quantity.
	}
	\label{fig:Overview_of_SWAQ}
\end{figure*}

The recurrent encoder and shared trunk produce
$\boldsymbol h_t=f_{\theta_{\mathrm{enc}}}(\tau_t)$, which is consumed by both pathways. The control pathway maps $\boldsymbol h_t$ to $\boldsymbol a_t$, whereas the training-only reconstruction pathway maps the same representation to the reconstructed target $\widehat{\boldsymbol y}_{t+1}=g_{\theta_{\mathrm{rec}}}(\boldsymbol h_t)$. Because both mappings depend on $\boldsymbol h_t$, the reconstruction loss backpropagates through the shared representation backbone.

\subsubsection{Recurrent Asymmetric Actor-Critic}

Although this reformulation is exact, the dimension of the complete history grows with time. Classical POMDP methods address this issue using a Bayesian belief state, which is a sufficient statistic of the action-observation history~\cite{kaelbling1998a}. 
SWAQ instead uses an LSTM recurrent encoder~\cite{hausknecht,heess2015,peng2018b} to recursively compress $\tau_t$ into the finite-dimensional history representation
$\boldsymbol h_t$.

\subsubsection{Training-Only Auxiliary Representation Learning}

Policy optimization alone does not explicitly determine which physical information should remain recoverable from the finite-dimensional history representation. SWAQ therefore supervises $\boldsymbol h_t$ using selected robot-state and terrain variables available in simulation. The targets are aligned with the next control step, introducing a predictive constraint while assigning the learned representation explicit physical semantics.

The training-only reconstruction pathway is supervised by simulator variables aligned with the next control step. Its targets comprise the base linear velocity $\boldsymbol v^b_{t+1}\in\mathbb R^3$, noise-free policy observation $\boldsymbol{o}^{\mathrm{clean}}_{t+1}\in \mathcal O$, and privileged terrain height scan $\boldsymbol m_{t+1}\in\mathbb R^{\text{length}\times \text{width}}$. Their concatenation defines the random reconstruction target
\begin{equation}
	\boldsymbol Y_{t+1}
	=
	\left[
	(\boldsymbol V^b_{t+1})^\top,
	(\boldsymbol O^{\mathrm{clean}}_{t+1})^\top,
	\operatorname{vec}(\boldsymbol M_{t+1})^\top
	\right]^\top .
	\label{eq:reconstruction_target}
\end{equation}

At step $t$, the control pathway produces $\boldsymbol a_t$ from $\boldsymbol h_t$, and the ensuing simulator transition yields a sample $\boldsymbol y_{t+1}$ of $\boldsymbol Y_{t+1}$. The action-history component of $\boldsymbol o^{\mathrm{clean}}_{t+1}$ is therefore the realized action $\boldsymbol a_t$. Equivalently, at decision epoch $k=t+1$, it is the previous action $\boldsymbol a_{k-1}$ and is available before $\boldsymbol a_k$ is selected.
The reconstruction pathway observes this completed transition only during training. Lowercase $\boldsymbol y_{t+1}$ and $\widehat{\boldsymbol y}_{t+1}$ denote the ground truth target and reconstruction samples used in the loss below.

\paragraph{State-and-terrain reconstruction}

The reconstruction targets are generated automatically from simulator rollouts and require no human annotations. With a diagonal normalization matrix $\boldsymbol W$, the reconstruction loss is

\begin{equation}
	\mathcal L_{\mathrm{rec}}
	=
	\left\|
	\boldsymbol W\left(\boldsymbol y_{t+1}-\widehat{\boldsymbol y}_{t+1}\right)
	\right\|_2^2.
	\label{eq:reconstruction_loss}
\end{equation}

Using targets aligned with step $t+1$ gives the reconstruction objective a predictive role. Reconstructing contemporaneous quantities would primarily encourage their decodability from the current history representation. In contrast, recovering $\boldsymbol Y_{t+1}$ requires $\boldsymbol h_t$ to retain information that is predictive of the ensuing robot-environment transition. The robot-state components provide physical grounding, while the terrain target encourages the recurrent encoder to aggregate terrain evidence revealed through interaction, despite the absence of direct terrain measurements from the policy observation.

The reconstruction objective also penalizes complete representation collapse. If $\boldsymbol h_t$ is constant, a deterministic decoder can produce only a constant prediction, whose minimum expected squared error is given by the weighted variance of $\boldsymbol Y_{t+1}$. Therefore, achieving a reconstruction error below this constant-predictor baseline requires $\boldsymbol h_t$ to retain information predictive of the time-varying targets.

\paragraph{Design rationale}

Latent self-prediction provides a common mechanism for imposing temporal structure on a learned representation. Given a projection $\boldsymbol z_t=q(\boldsymbol h_t)$, a representative objective predicts its action-conditioned successor:
\begin{equation}
	\mathcal L_{\mathrm{latent}}
	=
	\left\|
	p\left( \boldsymbol z_t,\boldsymbol a_t \right)
	-
	\operatorname{sg}\!\left(q(\boldsymbol h_{t+1})\right)
	\right\|_2^2 ,
	\label{eq:latent_self_prediction_reference}
\end{equation}
where, $q$ and $p$ denote a projection and a latent predictor, respectively, and $\operatorname{sg}(\cdot)$ stops the gradient through the prediction target~\cite{schwarzer2020,ni2024}.

SWAQ does not optimize \eqref{eq:latent_self_prediction_reference}. Instead, it transfers the one-step predictive principle from an unconstrained latent successor to selected privileged physical outcomes. The reconstruction mapping $\boldsymbol h_t\mapsto\widehat{\boldsymbol y}_{t+1}$ therefore combines temporal prediction with explicit robot-state and terrain semantics. This target choice specifies what should remain recoverable from the history representation without making the reconstructed quantities inputs to the deployed policy.

Applying \eqref{eq:optimal_achievable_return} to the fully observed, complete-history, and observation-only policy classes gives the standard POMDP information hierarchy
\begin{equation*}
	J_{\mathcal S}^{*}
	\ge
	J_{\mathrm{hist}}^{*}
	\ge
	J_{\boldsymbol O}^{*}.
	\label{eq:standard_pomdp_return_order}
\end{equation*}

The inequalities follow from Markov-state sufficiency and the inclusion of observation-only policies in the complete-history policy class.

SWAQ uses the privileged target $\boldsymbol Y$ only during training to specify what a finite-dimensional history representation $\boldsymbol H$ should retain, without making $\boldsymbol Y$ an input to the deployed policy.
Combining Lemma~\ref{lem:baseline_information_ordering} with Corollary~\ref{corollary:exact_recovery_no_loss}, under their stated Markov-sufficiency, exact-recovery, and policy-class expressivity conditions, yields
\begin{equation}
	J_{\mathcal S}^{*}
	\ge
	J_{\mathrm{hist}}^{*}
	\ge
	J_{\boldsymbol O,\boldsymbol H}^{*}
	\ge
	J_{\boldsymbol O,\boldsymbol Y}^{*}
	\ge
	J_{\boldsymbol O}^{*}.
	\label{eq:exact_recovery_return_order}
\end{equation}

For approximate recovery,
Corollary~\ref{corollary:achievable_return_bound} bounds
$J_{\boldsymbol O,\boldsymbol Y}^{*,L,C}
-J_{\boldsymbol O,\boldsymbol H}^{*}$ by
$\mathcal O(\sqrt{\varepsilon})$ under the stated regularity and distribution-coverage assumptions. These are achievable-return relations between policy classes rather than guarantees for the optimized actor; the learned policy is evaluated empirically through the reconstruction-component ablations and aligned policy comparisons.

\paragraph{Explicit estimator interfaces vs. policy-internal representation shaping}

\begin{figure*}[thbp]
	\centering
	\includegraphics[width=0.9\textwidth]{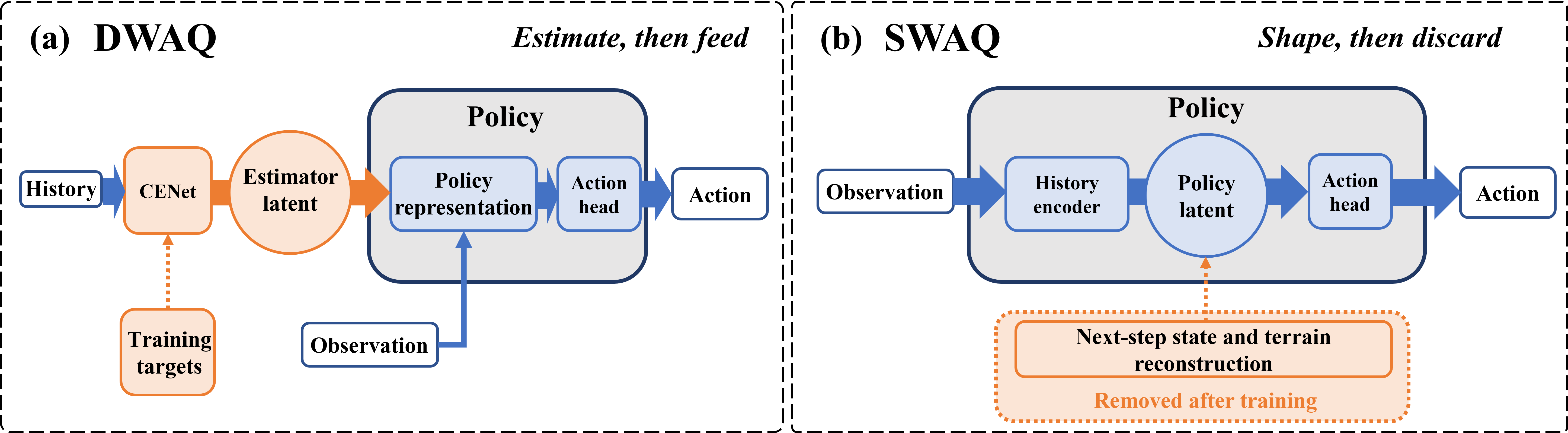}
	\caption{Comparison of information flow of DWAQ~\cite{aswinnahrendra2023} and SWAQ. (a) DWAQ uses a context-aided estimator to produce an estimated velocity and latent terrain context, which are explicitly supplied with the current observation to a downstream policy. (b) SWAQ applies a training-only reconstruction objective directly to the policy-internal history representation, while deployment retains only the history-to-action control pathway. Solid arrows denote deployed computation, whereas orange dotted arrows denote training-only supervision.}
	\label{fig:comparison_dwaq_swaq}
\end{figure*}

Fig.~\ref{fig:comparison_dwaq_swaq} reflects two different views of representation learning under partial observability. DWAQ treats the learned representation as an explicit interface between estimation and control: the estimator produces an intermediate representation for the downstream policy to consume. SWAQ instead treats semantic supervision as a means of shaping the policy's own decision representation. 
Its reconstruction objective specifies which robot-state and terrain information should remain recoverable from the policy-internal history representation, while leaving the policy free to organize this information for action generation. The key distinction is therefore one of representation ownership: DWAQ learns a representation for the policy, whereas SWAQ shapes the representation within the policy. In this sense, SWAQ moves semantic decomposition from the deployed computation graph into the training objective—it decomposes learning without decomposing inference. More generally, this suggests that information useful for resolving partial observability can serve as training-time supervision without becoming a mandatory runtime interface.

\subsubsection{Joint Representation-Policy Optimization}
\label{sec:joint_optimization}

SWAQ jointly updates the actor, critic, and reconstruction head using minibatches sampled from finite-horizon rollouts.
We denote the parameters of the recurrent history encoder and shared trunk, policy head, and critic by
$\theta_{\mathrm{enc}}$, $\theta_{\pi}$, and $\theta_{V}$, respectively. The parameters of the training-only state-and-terrain reconstruction head are denoted by $\theta_{\mathrm{rec}}$. The complete set of trainable parameters is given by
$
	\Theta
	=
	\left(
	\theta_{\mathrm{enc}},
	\theta_{\pi},
	\theta_V,
	\theta_{\mathrm{rec}}
	\right).
	\label{eq:swaq_parameters}
$

The policy and critic are trained using Proximal Policy Optimization (PPO)~\cite{schulman2017}. During each PPO update epoch, the PPO and state-and-terrain reconstruction objectives are evaluated on the same rollout minibatches and combined into the SWAQ training objective:
\begin{equation}
	\mathcal L_{\mathrm{SWAQ}}(\Theta)
	=
	\mathcal L_{\mathrm{PPO}}
	\left(
	\theta_{\mathrm{enc}},
	\theta_{\pi},
	\theta_V
	\right)
	+
	\lambda_{\mathrm{rec}}
	\mathcal L_{\mathrm{rec}}
	\left(
	\theta_{\mathrm{enc}},
	\theta_{\mathrm{rec}}
	\right).
	\label{eq:swaq_total_loss}
\end{equation}
where $\lambda_{\mathrm{rec}}$ control the contributions of the state-and-terrain reconstruction objective.

The PPO and reconstruction objectives are combined before backpropagation, and all trainable parameters are updated using a single optimizer. Denoting the learning rate by $\alpha$, the joint update can be written schematically as
\begin{equation}
	\Theta
	\leftarrow
	\Theta
	-
	\alpha
	\nabla_{\Theta}
	\mathcal L_{\mathrm{SWAQ}}(\Theta).
	\label{eq:swaq_parameter_update}
\end{equation}

Equations~\eqref{eq:swaq_total_loss} and \eqref{eq:swaq_parameter_update} make the gradient coupling explicit. The PPO objective updates the recurrent encoder, policy head, and critic, whereas the reconstruction objective updates the recurrent encoder and reconstruction head. Thus, $\theta_{\mathrm{enc}}$ is jointly shaped by policy learning and privileged physical reconstruction, while $\mathcal L_{\mathrm{rec}}$ does not directly update $\theta_{\pi}$ or $\theta_V$.

\subsection{Training and Implementation Details}

We train 4096 simulated robots in parallel. The terrain sampler uses ascending stairs (33.3\%), descending stairs (33.3\%), and equal shares of rough ground, uphill slopes, and downhill slopes (33.3\% in total). Stair treads range from $0.28$ to $0.35\,\mathrm{m}$, with step heights ranging from $0.0$ to $0.25\,\mathrm{m}$; slope magnitudes reach $0.35$. Terrain difficulty is increased progressively using curriculum learning~\cite{hwangbo2019}.
The physics simulation uses a $5\,\mathrm{ms}$ time step, and the policy gives a $20\,\mathrm{ms}$ control step ($50\,\mathrm{Hz}$). Each episode lasts $20\,\mathrm{s}$. 
Each training iteration collects 24 control steps from every environment, corresponding to 98,304 transitions in total. PPO then performs five update epochs with four minibatches per epoch. 

All policies, including ablation cases, are trained for 6000 iterations on a workstation with an Intel Core i9-14900K CPU, 128 GB RAM, and an NVIDIA RTX 4090 GPU.

The actor shown in Fig.~\ref{fig:Overview_of_SWAQ} uses a single-layer LSTM with 256 hidden units as the history encoder, a 256-unit MLP layer as the shared trunk, and two policy-head MLP layers with [256, 128] units. The auxiliary MLP decoder has two hidden layers with [128, 256] units. The critic uses a single-layer LSTM with 256 hidden units followed by three MLP layers with [512, 256, 128] units. All MLP layers use ELU activations.

\begin{table}[htbp]
	\centering
	\caption{Noise and Domain Randomization.}
	\begin{tabular}{lll}
		\toprule
		\textbf{Parameter Item} & \textbf{Value} & \textbf{Interval(s)} \\ 
		\hline
		\textbf{\textit{Material}}	&  &\\
		Friction & $\mathcal{U}(0.7, 1.2)$ & / \\
		Dynamic friction &  $\mathcal{U}(0.5, 0.8)$ & / \\
		Restitution &  $\mathcal{U}(0.0, 0.005)$ & / \\
		\textbf{\textit{Joint}} & & \\
		Friction  & $\exp(\mathcal{U}(\ln 0.1, \ln 10))\times$ default & / \\
		Armature & $\exp(\mathcal{U}(\ln 0.5, \ln 2))\times$ default & / \\
		Actuator stiffness & $\exp(\mathcal{U}(\ln 0.5, \ln 2))\times$ default & / \\
		Actuator damping & $\exp(\mathcal{U}(\ln 0.5, \ln 2))\times$ default & / \\
		\textbf{\textit{Link}}	&  & \\
		Mass/Inertia & $\mathcal{U}(0.8, 1.2)\times$ default & / \\
		\textbf{\textit{CoM Disturbance}} & & \\
		External force/torque & $\mathcal{U}(-20, 20)$ &  $\mathcal{U}(1, 5)$ \\
		Linear/angular velocity & $\mathcal{U}(-1, 1)$ / $\mathcal{U}(-0.5, 0.5)$ &  $\mathcal{U}(1, 5)$ \\
		\hline
	\end{tabular}
	\label{table:noise_and_domain_randomization}
\end{table}

\section{Simulation Experiments and Ablation Studies}

The experiments address four questions:
\begin{enumerate}
	\item[\textbf{Q1:}] What privileged physical and terrain semantics are recoverable from the learned history representations? We examine reconstructed terrain, autoregressive state rollouts, and the representation update triggered by the first foot-riser contact.
	
	\item[\textbf{Q2:}] Do these reconstructed semantics persist along the control pathway to the layers immediately upstream of the action output? We probe successive control-pathway representations.
	
	\item[\textbf{Q3:}] What theoretical support connects training-only privileged recoverability to the achievable control capability of a history-based policy class? Sections~\ref{app:information_retention_emulation} and~\ref{app:return_guarantees} of Appendix~\ref{app:proofs} connects reconstruction accuracy to information retention, privileged-policy emulation, and achievable-return bounds.
	
	\item[\textbf{Q4:}]  To what extent do the reconstruction components and the complete SWAQ design improve policy learning and locomotion performance? We use controlled ablations and aligned policy comparisons to quantify their effects.
\end{enumerate}

\subsection{Representation Semantics}

\subsubsection{Reconstructed State and Terrain Semantics}
We first decode the recurrent history representation into the robot state and local terrain quantities used by the auxiliary reconstruction objective. As shown in Fig.~\ref{fig:terrain_reconstruction_visualization}, the reconstructed height samples follow the overall terrain profile on level ground and during stair ascent and descent, indicating that the history representation retains information about nearby terrain geometry. The reconstruction is less accurate around stair edges and tends to replace abrupt height discontinuities with a smoother, nearly planar profile. This behavior is consistent with the averaging effect of pointwise squared-error regression~\cite{mathieu2016}; when the proprioceptive history does not uniquely determine a sharp terrain boundary, minimizing an $L_2$ reconstruction loss favors a conditional-mean estimate. Despite the loss of sharp geometric details, this smoothed terrain representation retains sufficient coarse information to support locomotion across the tested terrain types.

\begin{figure}[htbp]
	\centering
	\includegraphics[width=\columnwidth]{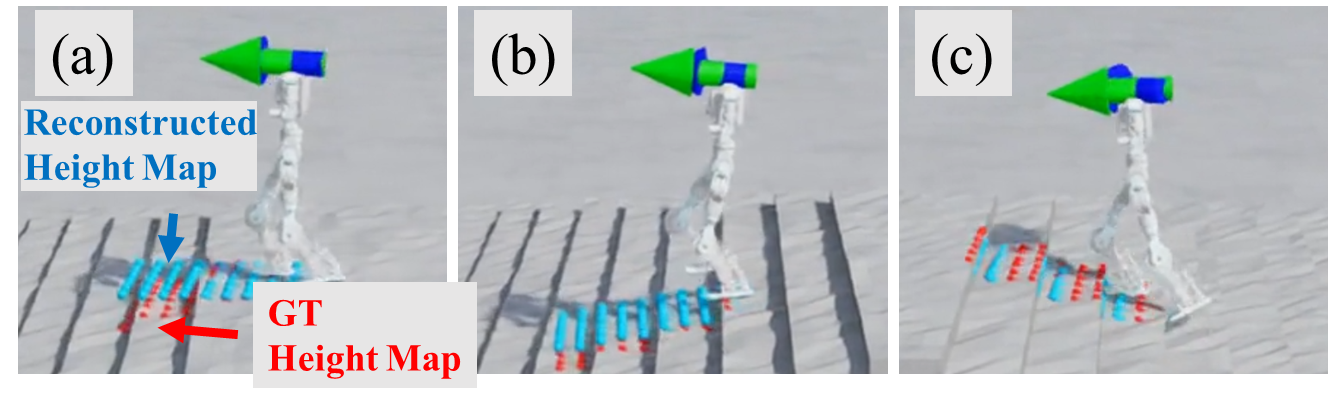}
	\caption{Training-only visualization of terrain reconstruction from the recurrent history representation: (a) Level Ground (ahead of the stairs), (b) Stair Descent, and (c) Stair Ascent. Red samples denote the simulator height-map target, whereas cyan samples denote its reconstruction.}
	\label{fig:terrain_reconstruction_visualization}
\end{figure}

\subsubsection{Autoregressive Future-state Visualization}

\begin{figure}[htbp]
	\centering
	\includegraphics[width=\columnwidth]{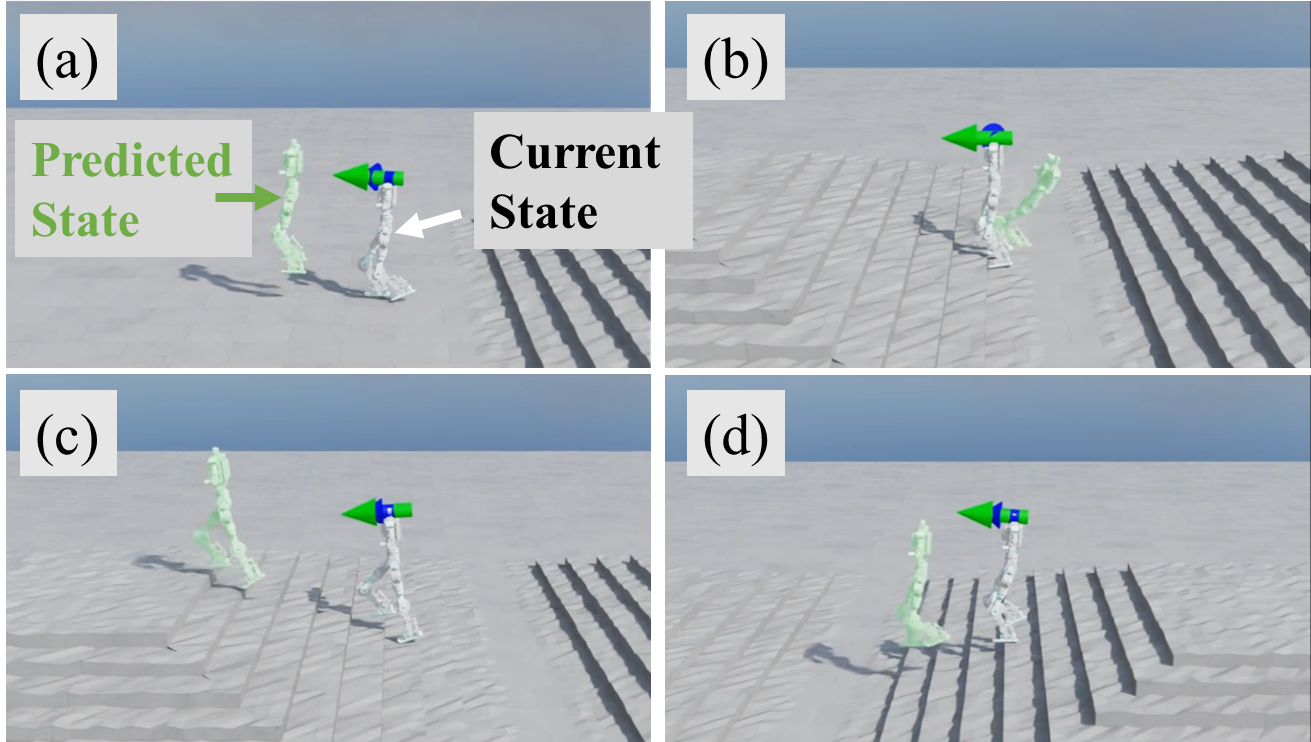}
	\caption{Training-only visualization of one-second decoder-actor rollouts: (a) level ground, (b) initial foot-riser contact, (c) steady stair ascent, and (d) stair descent. Each rollout starts from the recurrent state induced by the real history and then feeds the reconstructed observation back to the actor without ground-truth observation feedback. The opaque white robot shows the current state, whereas the translucent green robot shows the final reconstructed state.}
	\label{fig:future_state_visualization}
\end{figure}

We construct a one-second decoder-actor rollout to examine the temporal consistency of the reconstructed robot state. Starting from the recurrent state induced by the real observation history at time $t$, the shared reconstruction decoder produces the next-step robot state, clean policy observation, and terrain reconstruction. The reconstructed policy observation, including its reconstructed previous-action component, is then fed directly to the recurrent actor to update its memory and produce the next action and shared representation. Repeating this loop for 50 control steps without ground-truth observation feedback yields the rollout shown in Fig.~\ref{fig:future_state_visualization}. This diagnostic uses the reconstruction decoder $g_{\theta_{\mathrm{rec}}}$ for observation component while the reconstructed terrain component is not supplied to the actor.

The contrast between initial obstacle contact and steady stair ascent is particularly informative. Immediately after the foot strikes the first riser, recursively feeding the reconstructed observations produces a strongly destabilized pose. This visualization does not predict that the physical robot will fall, because reconstruction errors accumulate outside the one-step training distribution. During steady stair ascent, the rollout instead preserves the forward progression and alternating leg configuration associated with climbing. The difference indicates that the one-step reconstructions are more temporally self-consistent within the established climbing pattern than immediately after an unexpected collision.

Note that the rollout is used only as an offline diagnostic. Neither the reconstruction decoder nor the visualized future state participates in deployed action generation.

\subsubsection{Contact-Triggered Representation Update}

Fig.~\ref{fig:contact_event} treats the first foot-riser contact as a naturally occurring temporal boundary for examining how interaction-derived information enters the recurrent representation. We set $t_c=0$ at this event and align the robot motion, simulator contact forces, and reconstructed terrain quantities around it. The contact forces are logged only for analysis and are not included in the policy observation; the blind actor can sense the collision only indirectly through its effects on joint and body motion.

\begin{figure*}[htbp]
	\centering
	\includegraphics[width=0.8\textwidth]{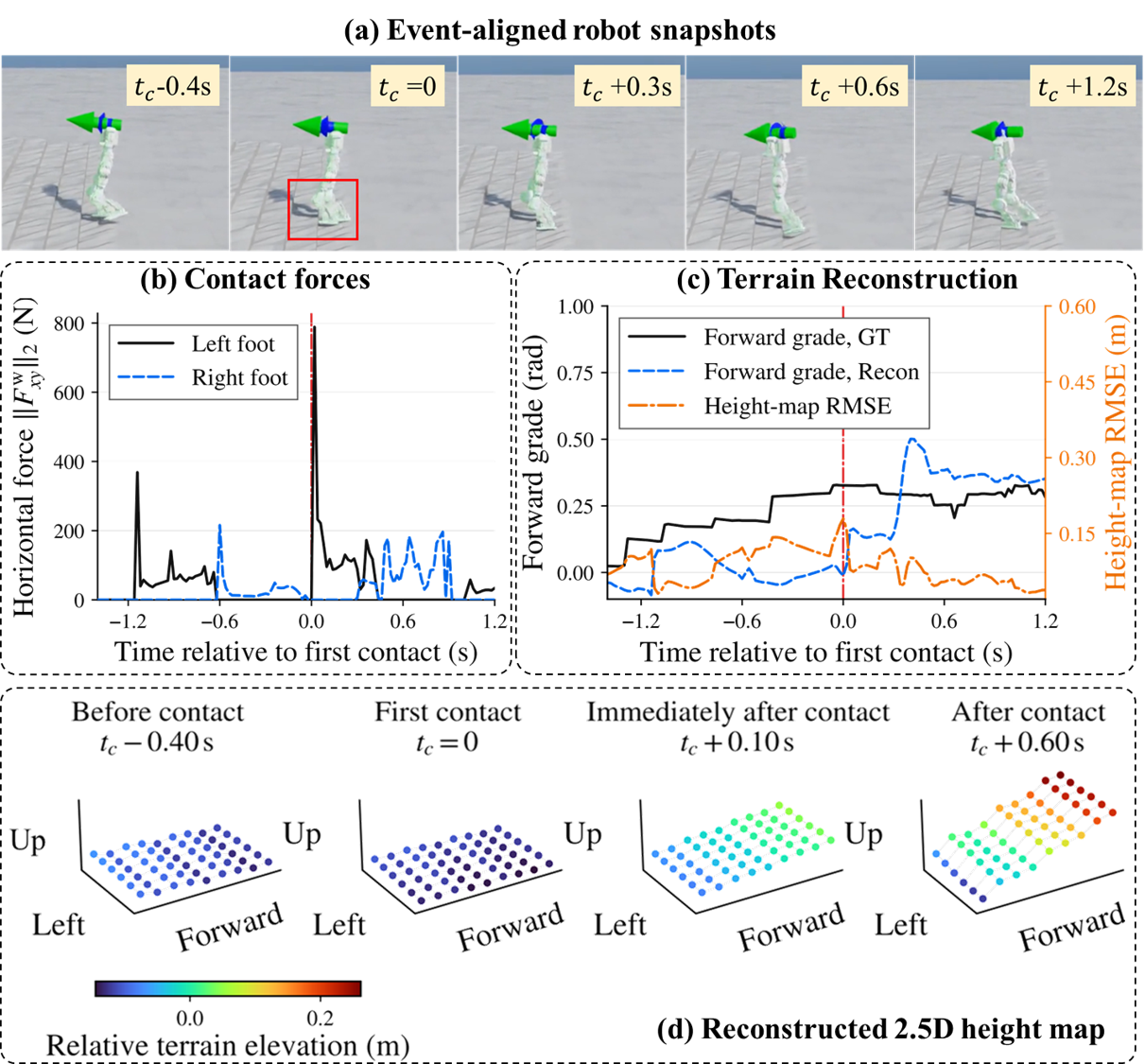}
	\caption{Contact-triggered analysis around the first foot-riser contact. (a) Time-aligned robot snapshots, with the first contact marked at $t_c=0$. (b) Horizontal contact-force magnitudes of the left and right feet; these simulator signals are used only for analysis. (c) Ground-truth and reconstructed forward terrain grades, together with the height-map RMSE. (d) Reconstructed 2.5-D height maps before and after contact. The initial collision is followed by a delayed and transiently exaggerated terrain reconstruction before the estimated grade approaches the ground truth.}
	\label{fig:contact_event}
\end{figure*}

At $t_c=0$, the horizontal contact force on the left foot rises abruptly to $0.79\,\mathrm{kN}$, identifying the first collision with the riser. The reconstructed forward grade simultaneously increases to $0.175\,\mathrm{rad}$ but remains below the ground-truth grade of about $0.3\,\mathrm{rad}$, while the height-map error reaches a local maximum. The reconstructed point grid therefore remains nearly flat at first contact: the collision provides new terrain evidence, but a single interaction does not determine the local geometry accurately.

The subsequent contacts refine this initial interpretation. From $t_c+0.3\,\mathrm{s}$, the right foot produces repeated horizontal-force peaks that are substantially smaller than the initial left-foot impact. During the same interval, the reconstructed grade continues to increase and transiently exceeds $0.5\,\mathrm{rad}$ at $t_c+0.42\,\mathrm{s}$, overshooting the ground-truth grade. By $t_c+0.52\,\mathrm{s}$, the reconstructed height map exhibits a coherent ascending structure. After the left foot establishes support on the first tread at $t_c+1.0\,\mathrm{s}$, the reconstructed grade moves toward the ground truth and the height-map error decreases.

This event-aligned sequence shows that terrain information is not recovered instantaneously at the first collision. Instead, the recurrent representation integrates the mechanical consequences of successive contacts, first forming an incomplete estimate, then revising it as the robot establishes a stable stair-ascent pattern. The first foot-riser contact thus acts as an interaction-induced information event that separates the unobserved approach from the subsequent accumulation of terrain evidence.

\subsection{Information Retention Along the Control Pathway}
\label{sec:information_retention}

\begin{figure*}[htbp]
	\centering
	\includegraphics[width=0.8\textwidth]{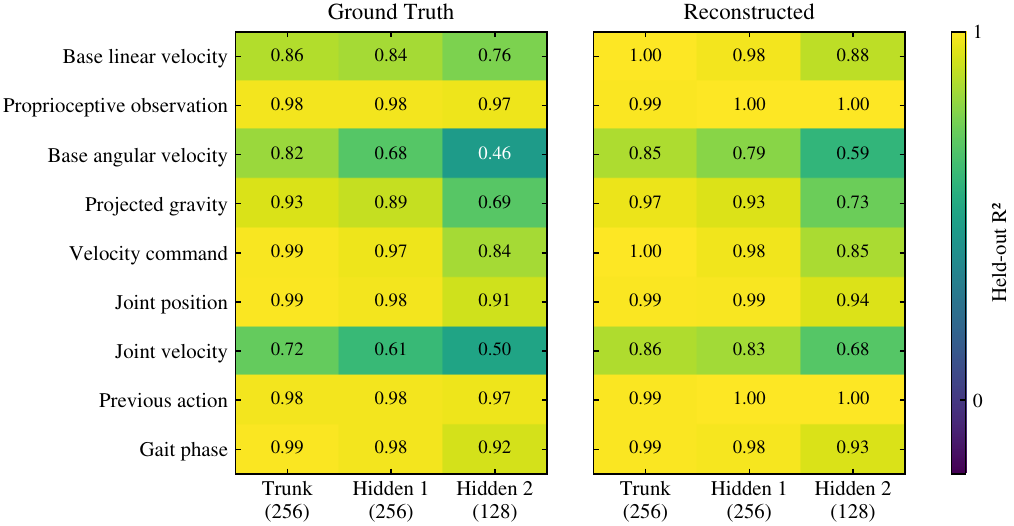}
	\caption{Held-out linear-probe $R^2$ for robot-related targets at successive control-pathway layers of the SWAQ policy trained for the humanoid robot. Ground truth and reconstructed report the layerwise decodability of their respective targets. Proprioceptive observation denotes the aggregate 50-dimensional policy-observation target.}
	\label{fig:probe_state}
\end{figure*}

\begin{figure*}[htbp]
	\centering
	\includegraphics[width=0.736\textwidth]{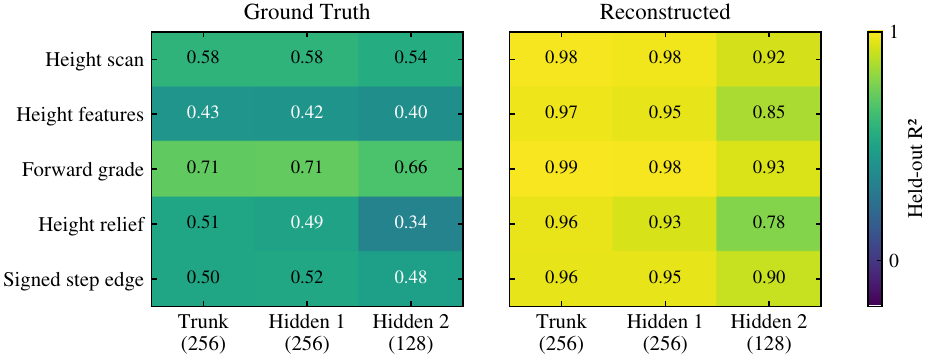}
	\caption{Held-out linear-probe $R^2$ for terrain-related targets at successive control-pathway layers. Height scan denotes the aggregate 54-dimensional terrain target.}
	\label{fig:probe_terrain}
\end{figure*}

We freeze the trained actor and fit independent ridge-regression probes
to the shared trunk (256-D), first policy-head hidden layer (256-D), and
second policy-head hidden layer (128-D). Hidden~2 is immediately upstream
of the action output. The probes use disjoint training and evaluation
environments. Their held-out performance is measured by
\begin{equation}
	R^2
	=
	1-
	\frac{\sum_i (y_i-\widehat y_i)^2}
	{\sum_i (y_i-\overline y_{\mathrm{test}})^2},
\end{equation}
where $R^2=1$ denotes exact prediction and $R^2=0$ matches prediction
by the test-set mean. For vector targets, the per-dimension scores are
aggregated using target-variance weighting. Ground truth and
reconstructed targets are probed separately; the comparison concerns
their layerwise decodability.

High decodability of reconstructed quantities at the shared trunk is expected because the auxiliary decoder is trained from this representation. The key result is their persistence through the two policy-head layers of the control pathway. These layers are optimized for action generation and receive no direct gradient from the training-only reconstruction pathway

Fig.~\ref{fig:probe_state} reports the robot-related probes. At the second hidden layer, the reconstructed quantities are more linearly decodable than their ground-truth counterparts. For example, $R^2$ increases from 0.76 to 0.88 for base linear velocity, from 0.46 to 0.59 for base angular velocity, and from 0.50 to 0.68 for joint velocity. The control-pathway representation is therefore more strongly aligned with the reconstructed state variables. The aggregate 50-dimensional policy-observation target remains almost fully decodable ($0.97/1.00$ for ground truth/reconstruction), while the previous action is the most strongly retained individual signal ($0.97/1.00$), consistent with action-conditioned recurrent control.

The terrain probes in Fig.~\ref{fig:probe_terrain} show a larger ground-truth reconstruction contrast. At the second hidden layer, the $R^2$ values for the aggregate height scan and height features increase from 0.54 and 0.40 for the current ground-truth terrain to 0.92 and 0.85 for their reconstructed counterparts. Forward grade is the most strongly retained scalar terrain descriptor, reaching 0.66 for the ground-truth target and 0.93 for its reconstruction. Because the blind policy receives no exteroceptive input, local terrain is inferred only after contact and body dynamics enter the proprioceptive history. The higher reconstructed-target $R^2$ is consistent with the recurrent representation consolidating this delayed evidence into its internal terrain estimate.

Across both target groups, the control-pathway representations are more linearly aligned with the internally reconstructed quantities, even when these quantities differ from the privileged ground truth. Their decodability through Hidden~2 shows that the corresponding information remains linearly accessible near the action output. The probe analysis characterizes representation-level persistence.
A complementary theoretical question is how imperfect recovery of privileged variables limits the return attainable by a history-based policy. Under the stated assumptions, Corollary~\ref{corollary:achievable_return_bound} bounds the achievable-return gap between the history-based and corresponding privileged-information policy classes in terms of the reconstruction error.

\subsection{Training Performance and Curriculum Progression}

\begin{figure*}[htbp]
	\centering
	\subfloat[Mean episode reward.]{\includegraphics[width=0.32\textwidth]{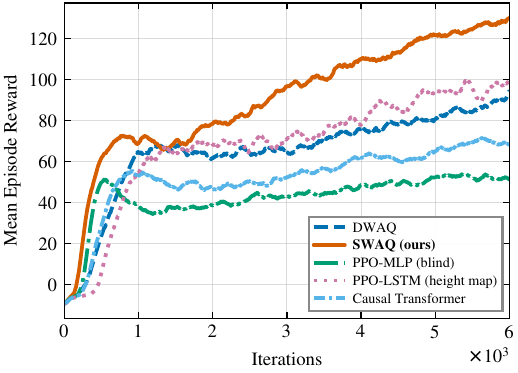}}
	\hfill
	\subfloat[Mean terrain level.]{\includegraphics[width=0.32\textwidth]{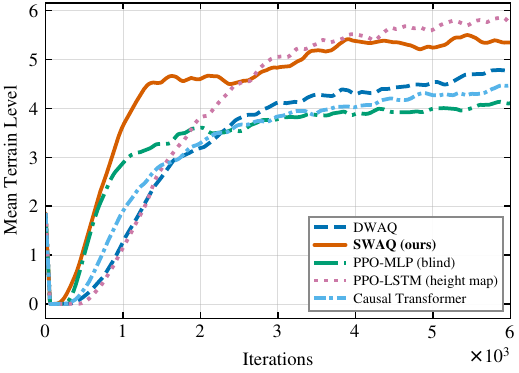}}
	\hfill
	\subfloat[Linear-velocity tracking reward.]{\includegraphics[width=0.32\textwidth]{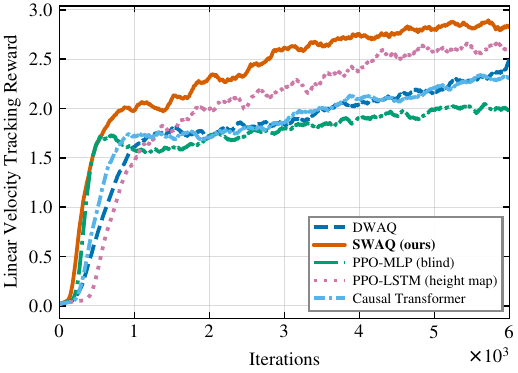}}
	\caption{Simulation training curves for SWAQ and the comparison policies. SWAQ attains the highest final return and a linear-velocity tracking
		reward close to that of the height-map policy while progressing to
		higher terrain levels than the non-exteroceptive comparison policies.}
	\label{fig:simulation_training_curves}
\end{figure*}

\begin{table*}[thbp]
	\centering
	\caption{Training outcomes and inference complexity.}
	\label{table:return_comparison}
	\renewcommand{\arraystretch}{1.08}
	\setlength{\tabcolsep}{4pt}
	\footnotesize
	\begin{tabular*}{\textwidth}{@{\extracolsep{\fill}}lcccccc@{}}
		\toprule
		\textbf{Policy}
		&
		\shortstack{\textbf{Training time}\\\textbf{(6000 iter.)}}
		&
		\shortstack{\textbf{Peak mean} \\\textbf{terrain level}}
		&
		\shortstack{\textbf{Mean $v_x$ error}\\\textbf{(last 100 iter.)}}
		&
		\shortstack{\textbf{Actor parameters}\\\textbf{(train/infer)}}
		&
		\shortstack{\textbf{Inference MACs}\\\textbf{per step}}
		&
		\shortstack{\textbf{Input}\\\textbf{dimension}}
		\\
		\midrule
		DWAQ
		& 2 h 56 min & 4.8168 & 0.4852
		& 865,164 & 0.861 M & $50+10\times50$ \\
		
		\textbf{SWAQ (Ours)}
		& 2 h 51 min & 5.5410 & 0.5668
		& 574,981/481,562 & 0.479 M & $50$ \\
		
		PPO-MLP (Blind)
		& 2 h 04 min & 4.1540 & 0.5867
		& 113,434 & 0.113 M & $50$ \\
		
		PPO-LSTM (Height Map)
		& 2 h 22 min & 5.8863 & 0.5570
		& 536,858 & 0.534 M & $50+54$ \\
		
		Causal Transformer
		& 8 h 26 min & 4.4888 & 0.5807
		& 424,378 & 6.583 M & $16\times50$ \\
		\bottomrule
	\end{tabular*}
	
	\vspace{2pt}
	\parbox{\textwidth}{\footnotesize
		The explicit history windows are expressed as
		context length $\times$ observation dimension.}
\end{table*}

Fig.~\ref{fig:simulation_training_curves} compares SWAQ with the estimator-conditioned DWAQ architecture~\cite{aswinnahrendra2023} and three task-objective-only baselines on the humanoid embodiment: blind PPO-MLP, height-map PPO-LSTM~\cite{rudin2022b}, and causal Transformer~\cite{radosavovic2024a}. Here, \emph{task-objective-only} means that the policy representation is learned solely through the PPO objective, without auxiliary reconstruction or estimation supervision. All methods use the same humanoid model, reward, environment, curriculum, and training settings, while their observations and representation architectures follow the corresponding baseline definitions. Two-stage blind-locomotion methods, such as \cite{lee2020,kumar2021a}, are not included. The three panels report the episode return, mean terrain level, and linear-velocity tracking reward.

SWAQ enters the high-return regime early and continues improving throughout training. Its final episode return exceeds those of all comparison policies. SWAQ also reaches higher terrain levels than the non-exteroceptive baselines while maintaining velocity-tracking performance close to the height-map PPO-LSTM. Although the height-map policy reaches the highest curriculum level, SWAQ narrows this gap without receiving exteroceptive terrain measurements.

With the reward and curriculum settings aligned, these differences cannot be attributed to task-specific reward shaping or a more favorable terrain schedule. They are consistent with SWAQ's auxiliary objectives shaping the shared history representation to retain terrain-relevant information from proprioceptive interactions and learn obstacle-negotiation behavior more effectively than DWAQ and other recurrent and causal learning baselines. 

One possible explanation is that privileged representation shaping provides an inductive bias about which robot--environment information should remain represented in the policy's recurrent state. Specifically, the selected reconstruction targets encode the designer's prior knowledge about which physical variables are likely to matter for locomotion, thereby reducing the burden of discovering this information solely from the task objective.
In DWAQ, semantic supervision is applied to a separate estimator interface, while the downstream policy must still learn from the task objective how to organize and use the estimated quantities. This representation-discovery burden is greater still for the causal Transformer, which must jointly discover which temporal evidence to preserve, propagate it through successive network layers, and map it into control actions. By applying semantic supervision directly to the policy-internal history representation, SWAQ may shorten this representation-discovery process. This interpretation is consistent with the matched reconstruction ablations and the observed curriculum progression, although the comparison does not isolate it from differences among the baseline architectures.

Table~\ref{table:return_comparison} summarizes the training outcomes and inference complexity. Despite having a comparable actor parameter count, the causal Transformer processes an explicit $16\times50$ history window at each control step and requires 6.583~M MACs, approximately $13.7\times$ the 0.479~M MACs required by SWAQ.
SWAQ instead consumes a 50-dimensional per-step input and carries historical information in its fixed-dimensional recurrent state.

\subsection{Effects of Reconstruction Components}
\begin{figure*}[htbp]
	\centering
	\subfloat[Mean episode reward.]{\includegraphics[width=0.32\textwidth]{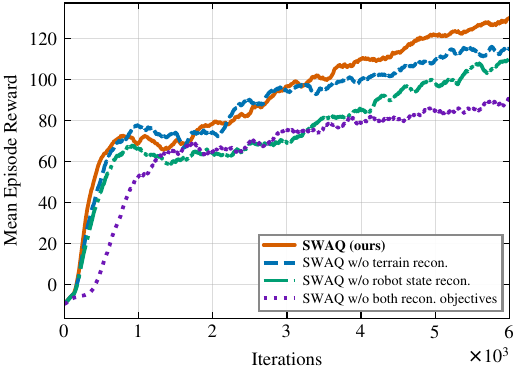}}\hfill\subfloat[Mean terrain level.]{\includegraphics[width=0.32\textwidth]{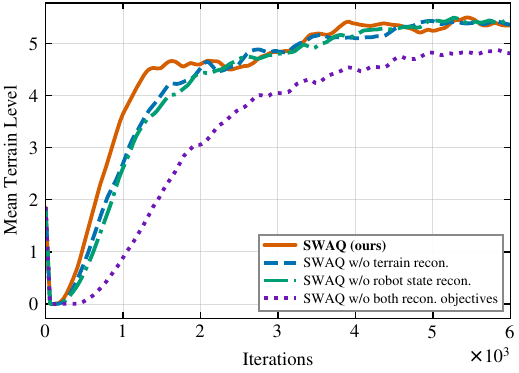}}\hfill\subfloat[Linear-velocity tracking reward.]{\includegraphics[width=0.32\textwidth]{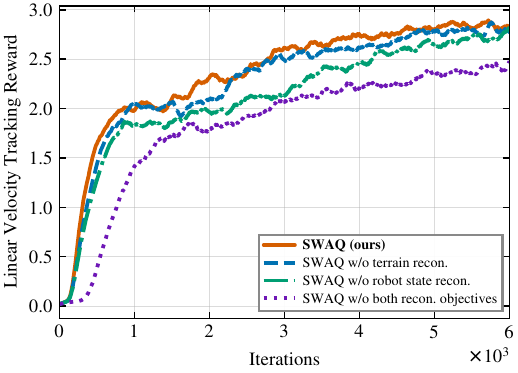}}\caption{Training curves for SWAQ and three reconstruction-objective ablations. Removing both robot-state and terrain reconstruction produces the largest and most persistent degradation in return, curriculum progression, and velocity tracking, whereas removing either component alone has a smaller effect.}
	\label{fig:auxiliary_objective_ablations}
\end{figure*}

To isolate the effect of reconstruction supervision, we retrain the same recurrent actor under identical PPO, reward, terrain, and curriculum settings. We remove terrain reconstruction and robot-state reconstruction individually (\emph{w/o terrain recons.} and \emph{w/o robot-state recons.}) and jointly (\emph{w/o both recons.}). The joint-removal variant corresponds to a reward-only blind PPO-LSTM policy and therefore provides the recurrent blind baseline complementary to the policy comparisons in Fig.~\ref{fig:simulation_training_curves}. All variants retain the same recurrent actor architecture and deployed control pathway.

As shown in Fig.~\ref{fig:auxiliary_objective_ablations}, jointly removing robot-state and terrain reconstruction produces the largest degradation across the three training curves, whereas either individual ablation retains more of the complete model's performance. The robot-state targets may also provide indirect terrain supervision because proprioceptive quantities such as leg joint angles encode terrain-dependent contact and kinematic responses. These results support using complementary reconstruction targets to shape the shared decision representation during training.

\section{Real-World Experiments}

We evaluate SWAQ on a humanoid and a quadruped robot (Go1) to test whether its representation-learning principle depends on a particular embodiment. Cross-platform evaluation has likewise been used to assess the scalability of legged-locomotion frameworks across robot morphologies~\cite{nahrendra2026}. The two SWAQ policies use embodiment-specific observation and action spaces but retain the same one-stage auxiliary-learning design and proprioception-only deployment setting.

Figs.~\ref{fig:humanoid_stair} and~\ref{fig:Go1_stair} show time-ordered snapshots of stair ascent and descent. Because the policies receive no exteroceptive terrain measurements, the first step cannot be identified before physical interaction. The red boxes therefore highlight the first foot-stair contact in each trial. After this contact, the policy produces a corrective leg-lifting motion and subsequently establishes a repeatable stair-negotiation gait without an explicit terrain classifier or a hand-designed control-mode switch.

\begin{figure*}[htbp]
	\centering
	\includegraphics[width=6.5 in]{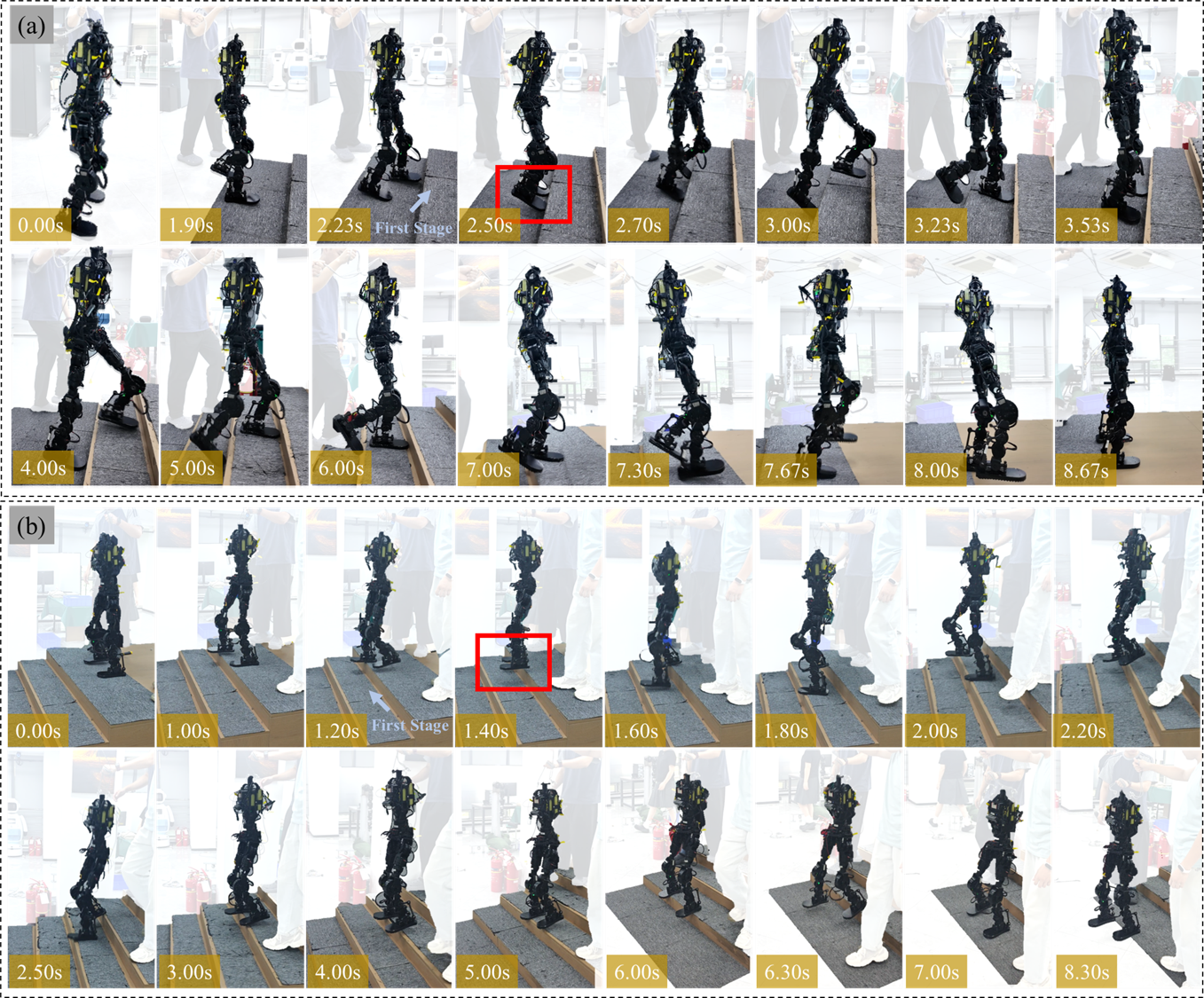}
	\caption{Time-ordered snapshots of real-world stair (with 6 stages, 30cm$\times$10cm of each stage) traversal by the humanoid robot: (a) ascent and (b) descent. The red boxes indicate the first foot-stair contact, and the white labels report the elapsed time from the beginning of each trial.}\label{fig:humanoid_stair}
\end{figure*}

For the humanoid robot, the first ascent contact occurs at $2.50\,\mathrm{s}$. The initially obstructed foot is lifted onto the first tread, after which alternating footholds carry the robot to the upper platform; the displayed sequence spans approximately $8.7\,\mathrm{s}$. During descent, the first interaction with the staircase occurs at $1.40\,\mathrm{s}$. The policy then regulates successive downward foot placements while maintaining balance, and the robot reaches the lower floor within the $8.3\,\mathrm{s}$ sequence shown in Fig.~\ref{fig:humanoid_stair}(b).

\begin{figure*}[htbp]
	\centering
	\includegraphics[width=6.5 in]{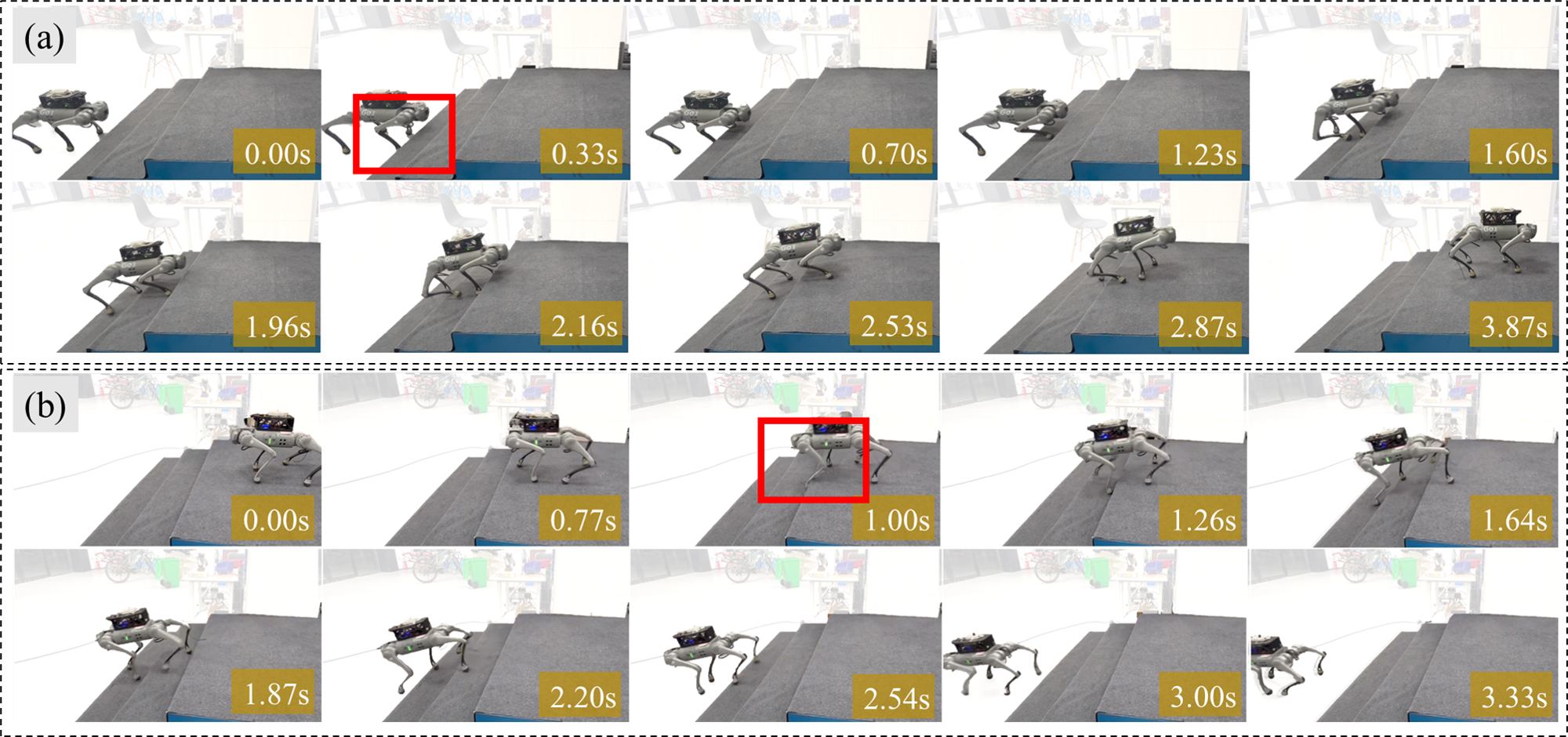}
	\caption{Time-ordered snapshots of real-world stair (with 3 stages, 30cm$\times$13cm of each stage) traversal by the quadruped robot (Unitree Go1): (a) ascent and (b) descent.}\label{fig:Go1_stair}
\end{figure*}

The Go1 provides a different leg number, contact sequence, and body geometry. In the ascent trial, the first foot-riser contact occurs at $0.33\,\mathrm{s}$; the contacting leg is subsequently raised, and the robot transitions into coordinated stair climbing before reaching the upper platform at approximately $3.87\,\mathrm{s}$. In the descent trial, the first staircase contact is highlighted at $1.00\,\mathrm{s}$. The policy then coordinates the fore and hind legs to traverse the remaining steps and reaches the lower floor at approximately $3.33\,\mathrm{s}$. Together with the humanoid results, these trials show that the proposed representation-learning principle supports interaction-driven adaptation across both bipedal and quadrupedal locomotion.

\section{Conclusion}
Focusing on blind locomotion in legged robots, this work frames partial observability as an information-retention problem and develops SWAQ as a concrete realization of this view. The main finding is that privileged physical supervision can shape what a recurrent history representation retains without requiring the corresponding variables to become explicit inputs to the deployed controller.

Layerwise analyses of SWAQ show that selected robot-dynamics and coarse terrain information remains linearly decodable up to the layer preceding the action output. Under aligned settings, SWAQ achieves higher return and terrain-curriculum progression than the non-exteroceptive baselines.
Successful humanoid and quadrupedal trials provide qualitative evidence that the same training design can produce interaction-driven stair traversal on both tested embodiments. Under the stated assumptions, the theoretical analysis establishes when auxiliary recoverability implies information retention and bounds the achievable-return loss under approximate reconstruction.

More broadly, SWAQ does not eliminate representation learning or prescribe how the supervised variables must be used. Instead, it brings part of the choice of what the representation should retain into the training objective, rather than leaving this choice to emerge only through task-return optimization. The policy remains free to organize its internal state and map it directly to actions. This provides an information-centric route to end-to-end policy design: semantic supervision can structure learning without imposing a corresponding architectural decomposition at deployment.

The current policy infers contact effects indirectly from proprioceptive motion. Future work will incorporate foot-force or tactile sensors to study more directly how contact information interacts with the learned history representation and robot motion. The training-only reconstruction pathway can also be extended to physical signals that are measurable in both simulation and real-world rollouts. Differences in their reconstruction errors or representations may provide a diagnostic view of the sim-to-real gap without making these signals part of the deployed control input. Whether such discrepancies can reliably diagnose this gap remains to be evaluated.
Finally, the present study considers auxiliary supervision for physical and terrain semantics within locomotion. More complex tasks may benefit from layer-specific auxiliary supervision. In robot soccer, for example, ball localization and goal-region segmentation could shape intermediate visual representations during training without becoming fixed inputs to a downstream controller. This would preserve a single deployable control pathway while allowing different auxiliary branches to supervise physical, geometric, and task-level semantics at different representation depths. Future work should determine when this design is preferable to explicit modular interfaces, which may remain valuable when calibrated outputs, interpretability, or independent verification are required.

\bibliographystyle{IEEEtran}

\bibliography{IEEEabrv,blind}

\appendices
\section{Proofs}
\label{app:proofs}

\subsection{Preliminaries and Notation}
\label{appendix:preliminaries}

\begin{assumption}[Temporal Alignment of the Reconstruction Target]
	\label{assumption:temporal_alignment}
	For a transition indexed by $t$, the auxiliary reconstruction pathway predicts the next-step target $\boldsymbol Y_{t+1}$ defined in \eqref{eq:reconstruction_target} from $\boldsymbol H_t$. Let $k=t+1$ denote the succeeding decision epoch. Equivalently,
	\begin{equation}
		\widehat{\boldsymbol Y}_k=g(\boldsymbol H_{k-1}).
		\label{eq:temporally_aligned_recovery}
	\end{equation}
	
	Here, $g$ denotes the parameter-suppressed theoretical counterpart of the implemented reconstruction mapping $g_{\theta_{\mathrm{rec}}}$.
	The target $\boldsymbol Y_k$ is formed from the simulator variables available at the beginning of decision epoch $k$, after the preceding transition and before $\boldsymbol A_k$ is selected. In particular, the action-history component of $\boldsymbol O_k^{\mathrm{clean}}$ is the previously executed action $\boldsymbol A_{k-1}$. At $k=0$, $\boldsymbol Y_0$, the previous-action entry, and the carried history representation $\boldsymbol H_{-1}$ are initialized by the simulator and actor reset conventions.
\end{assumption}

At decision epoch $k$, $\boldsymbol O_k$ denotes the current nonprivileged policy observation. It should not be confused with the privileged noise-free observation $\boldsymbol O_k^{\mathrm{clean}}$ contained in $\boldsymbol Y_k$. The recurrent actor first updates $\boldsymbol H_k=F(\boldsymbol H_{k-1},\boldsymbol O_k)$ and then generates the action from $\boldsymbol H_k$; hence, its complete decision rule is a measurable function of $(\boldsymbol O_k,\boldsymbol H_{k-1})$.

Let
\begin{equation*}
	X_k:=(s_k,\tau_k,\boldsymbol H_{k-1},\boldsymbol Y_k)\in\mathcal X
\end{equation*}
be the common augmented pre-action state used for analysis. It contains
the latent robot-environment state and the complete nonprivileged
history; the added representation and target components are updated
from the same underlying process. Hence, $(X_k)$ is Markov, and
\begin{equation*}
	(\boldsymbol O_k,\boldsymbol H_{k-1},\boldsymbol Y_k)=\Gamma(X_k).
\end{equation*}

To compare policies on this common process, we define the following induced policy classes on $\mathcal X$:
\begin{align*}
	\Pi_{\boldsymbol O}
	&:=\left\{x\mapsto\bar\pi(\cdot\mid \boldsymbol O(x))\right\},\\
	\Pi_{\boldsymbol O,\boldsymbol Y}
	&:=\left\{x\mapsto\bar\pi^+(\cdot\mid \boldsymbol O(x),\boldsymbol Y(x))\right\},\\
	\Pi_{\boldsymbol O,\boldsymbol H}
	&:=\left\{x\mapsto\widetilde\pi(\cdot\mid \boldsymbol O(x),\boldsymbol H(x))\right\},
\end{align*}
where the displayed decision rules range over admissible measurable mappings, and $\boldsymbol O(x)$, $\boldsymbol H(x)$, and $\boldsymbol Y(x)$ denote the coordinate projections corresponding to $\boldsymbol O_k$, $\boldsymbol H_{k-1}$, and $\boldsymbol Y_k$, respectively. We use $\mu$ for the population distribution of the temporally aligned tuple induced by the reconstruction-training data, rather than for an empirical minibatch distribution. Throughout the appendix, unindexed $(\boldsymbol O,\boldsymbol H,\boldsymbol Y)$ denotes a generic realization of $(\boldsymbol O_k,\boldsymbol H_{k-1},\boldsymbol Y_k)$.

\begin{lemma}[Baseline Information-Class Ordering]
	\label{lem:baseline_information_ordering}
	Let $\Pi_{\mathcal S}$ denote the class of admissible randomized
	Markov policies on the latent state $s_k$, and define
	$J_{\mathcal S}^{*}:=J^{*}(\Pi_{\mathcal S})$. Assume that the fully
	observed latent-state MDP admits a Markov policy that is optimal over
	all admissible nonanticipative policies. Then
	\begin{equation}
		J_{\mathcal S}^{*}
		\ge
		J_{\mathrm{hist}}^{*}
		\ge
		J_{\boldsymbol O,\boldsymbol H}^{*}
		\ge
		J_{\boldsymbol O}^{*}.
		\label{eq:baseline_information_ordering}
	\end{equation}
\end{lemma}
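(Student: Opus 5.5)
The chain splits into three inequalities, each proved by a policy-class inclusion or a simulation argument on the common augmented process $(X_k)$. All returns are evaluated on the same trajectory law generated by $\rho_0$, $P$, $\Omega$, and the deterministic update $\boldsymbol H_k=F(\boldsymbol H_{k-1},\boldsymbol O_k)$. Consequently, $J^*$ of a larger class always dominates $J^*$ of a smaller one, because the supremum in \eqref{eq:optimal_achievable_return} is then taken over a superset.

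\textbf{Right inequality, $J_{\boldsymbol O,\boldsymbol H}^{*}\ge J_{\boldsymbol O}^{*}$.} Any $\bar\pi(\cdot\mid\boldsymbol O)$ is a member of $\Pi_{\boldsymbol O,\boldsymbol H}$, namely $\widetilde\pi(\cdot\mid\boldsymbol O,\boldsymbol H):=\bar\pi(\cdot\mid\boldsymbol O)$, which ignores $\boldsymbol H$ and is measurable. Since it induces the same trajectory law, it has the same return. Hence $\Pi_{\boldsymbol O}\subseteq\Pi_{\boldsymbol O,\boldsymbol H}$, and the inequality follows.

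\textbf{Middle inequality, $J_{\mathrm{hist}}^{*}\ge J_{\boldsymbol O,\boldsymbol H}^{*}$.} Unrolling the recursion from the reset convention $\boldsymbol H_{-1}$ (Assumption~\ref{assumption:temporal_alignment}) shows that $\boldsymbol H_{k-1}$ is a measurable function $\phi_{k}(\tau_{k-1})$ of the past nonprivileged history, and therefore of $\tau_k$. Likewise $\boldsymbol O_k$ is the last coordinate of $\tau_k$. Thus each $\widetilde\pi$ defines the history-dependent policy $\pi(\cdot\mid\tau_k):=\widetilde\pi(\cdot\mid\boldsymbol o_k,\phi_k(\tau_{k-1}))\in\Pi_{\mathrm{hist}}$, and an induction on $k$ shows the joint law of $(s_k,\tau_k,\boldsymbol a_k)$ is identical under both. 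The one delicate point is that $\boldsymbol H$ is a function of the \emph{realized} history only, with no dependence on $s_k$ or on the privileged $\boldsymbol Y_k$. This holds because $\boldsymbol Y$ enters only the training loss and never $F$. If the encoder carries its own randomness, that randomness is absorbed by allowing randomized history policies.

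\textbf{Left inequality, $J_{\mathcal S}^{*}\ge J_{\mathrm{hist}}^{*}$.} I expect this to be the main obstacle, because history policies are not functions of $s_k$ alone. The plan is to view any $\pi\in\Pi_{\mathrm{hist}}$ as a nonanticipative policy in the fully observed MDP whose information at time $k$ is $(s_0,\dots,s_k,\boldsymbol a_0,\dots,\boldsymbol a_{k-1})$. Observations are generated from $\Omega$ given $(s_k,\boldsymbol a_{k-1})$, so sampling them inside the policy, via an external randomization in the sense of a kernel composition, realizes $\pi$ as a randomized nonanticipative policy with the same joint law of $(s_k,\boldsymbol a_k)$ and hence the same return $J(\pi)$. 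The lemma's hypothesis then supplies a Markov policy in $\Pi_{\mathcal S}$ that is optimal over all such policies, so $J_{\mathcal S}^{*}\ge J(\pi)$ for every $\pi$; taking the supremum gives the claim.

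The measure-theoretic care needed here consists of writing the simulated observation kernel as a measurable randomized decision rule and checking that the rewards $r(s_k,\boldsymbol a_k,s_{k+1})$, bounded by $r_{\max}$, have identical expectations. Both checks follow by induction on the finite-horizon marginals, and the discounted sums then converge by dominated convergence.
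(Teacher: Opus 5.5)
Your proposal is correct and follows essentially the same argument as the paper, with the same three steps: history policies are admissible nonanticipative policies in the latent-state MDP and so are dominated by the optimal Markov policy; $(\boldsymbol O_k,\boldsymbol H_{k-1})$ is a measurable function of $\tau_k$ for a fixed encoder; and $\Pi_{\boldsymbol O}$ embeds in $\Pi_{\boldsymbol O,\boldsymbol H}$ by ignoring $\boldsymbol H$. Your observation-simulation construction only spells out what the paper asserts in one line for the first inequality, and your appeal to $r_{\max}$ is unnecessary (it is not a hypothesis of this lemma), since agreement of all finite-dimensional laws already gives identical trajectory laws and hence identical returns.
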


\begin{proof}
	Every policy in $\Pi_{\mathrm{hist}}$ is an admissible
	nonanticipative policy for the underlying latent-state control
	process. By the assumed sufficiency of Markov policies in the fully
	observed MDP, its return is no larger than $J_{\mathcal S}^{*}$.
	Taking the supremum over $\Pi_{\mathrm{hist}}$ gives
	$J_{\mathcal S}^{*}\ge J_{\mathrm{hist}}^{*}$.

	For a fixed recurrent encoder, $\boldsymbol O_k$ and
	$\boldsymbol H_{k-1}$ are measurable functions of $\tau_k$. Hence,
	for every $\widetilde\pi\in\Pi_{\boldsymbol O,\boldsymbol H}$, the
	decision rule
	\begin{equation*}
		\pi_{\mathrm{hist}}(\cdot\mid\tau_k)
		:=
		\widetilde\pi
		\bigl(
		\cdot\mid
		\boldsymbol O_k(\tau_k),
		\boldsymbol H_{k-1}(\tau_k)
		\bigr)
	\end{equation*}
	belongs to $\Pi_{\mathrm{hist}}$ and induces the same conditional
	action distributions and trajectory law. Taking suprema therefore
	gives $J_{\mathrm{hist}}^{*}\ge
	J_{\boldsymbol O,\boldsymbol H}^{*}$. Finally, every policy in
	$\Pi_{\boldsymbol O}$ is represented in
	$\Pi_{\boldsymbol O,\boldsymbol H}$ by ignoring
	$\boldsymbol H_{k-1}$, which gives
	$J_{\boldsymbol O,\boldsymbol H}^{*}\ge J_{\boldsymbol O}^{*}$.
\end{proof}

For any induced policy $\pi$, let $\rho_\pi^X$ denote its normalized discounted occupancy distribution on the common Markov state space $\mathcal X$. For every measurable $B\subseteq\mathcal X$,
\begin{equation*}
	\rho_\pi^X(B)
	:=
	(1-\gamma)
	\sum_{k=0}^{\infty}
	\gamma^k
	\Pr_\pi(X_k\in B).
\end{equation*}

\subsection{Information Retention and Exact Policy Emulation}
\label{app:information_retention_emulation}

\begin{proposition}[Reconstruction Accuracy Implies Representation-Level Information Retention]
	\label{prop:auxiliary_recovery_retention}
	
	Under Assumption~\ref{assumption:temporal_alignment}, let $(\boldsymbol H,\boldsymbol Y,\widehat{\boldsymbol Y})=(\boldsymbol H_{k-1},\boldsymbol Y_k,\widehat{\boldsymbol Y}_k)$ denote the temporally aligned variables, with
	\begin{equation*}
		\widehat{\boldsymbol Y}=g(\boldsymbol H).
	\end{equation*}
	
	Define the distortion
	\begin{equation*}
		d_{\mathcal Y}(\boldsymbol y,\widehat{\boldsymbol y})
		:=
		\left\|\boldsymbol W(\boldsymbol y-\widehat{\boldsymbol y})\right\|_2^2.
	\end{equation*}
	
	Suppose that the population reconstruction loss satisfies
	\begin{equation*}
		\mathcal L_{\mathrm{rec}}^\mu
		:=
		\mathbb E_\mu
		\left[d_{\mathcal Y}(\boldsymbol Y,\widehat{\boldsymbol Y})\right]
		\le\varepsilon.
	\end{equation*}
	
	Then
	\begin{equation}
		I_\mu(\boldsymbol Y;\boldsymbol H)
		\ge
		I_\mu(\boldsymbol Y;\widehat{\boldsymbol Y})
		\ge
		R_{\boldsymbol Y,\mu}(\varepsilon),
		\label{eq:representation_information_retention}
	\end{equation}
	where $R_{\boldsymbol Y,\mu}(\varepsilon)$ denotes the rate-distortion function of $\boldsymbol Y$ under distribution $\mu$ with respect to the distortion measure $d_{\mathcal Y}$.
\end{proposition}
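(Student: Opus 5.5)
The plan is to prove the two inequalities in \eqref{eq:representation_information_retention} separately. The first is a data-processing argument. The second uses the operational definition of the rate-distortion function.

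\textbf{First inequality.} Under $\mu$, the reconstruction is a deterministic measurable function of the history representation, $\widehat{\boldsymbol Y}=g(\boldsymbol H)$, by Assumption~\ref{assumption:temporal_alignment}. Hence $\boldsymbol Y\to\boldsymbol H\to\widehat{\boldsymbol Y}$ forms a Markov chain: conditional on $\boldsymbol H$, $\widehat{\boldsymbol Y}$ is degenerate and therefore independent of $\boldsymbol Y$. The data-processing inequality then gives $I_\mu(\boldsymbol Y;\boldsymbol H)\ge I_\mu(\boldsymbol Y;\widehat{\boldsymbol Y})$.

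Since $\boldsymbol H$ and $\boldsymbol Y$ are continuous-valued, I will state this in the general (Dobrushin/Pinsker) form. Mutual information is defined as the supremum over finite measurable partitions, or equivalently as the KL divergence between the joint law and the product of marginals. In that form, data processing under a measurable map holds without density assumptions, with values in $[0,\infty]$. The only things to verify are that $g$ is Borel measurable, which holds because it is a composition of affine maps and ELU activations, and that all quantities are taken under the same population law $\mu$.

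\textbf{Second inequality.} I will use the definition
\[
R_{\boldsymbol Y,\mu}(D)=\inf\bigl\{I(\boldsymbol Y;\boldsymbol Z):\ P_{\boldsymbol Z\mid\boldsymbol Y}\ \text{a kernel with}\ \mathbb E[d_{\mathcal Y}(\boldsymbol Y,\boldsymbol Z)]\le D\bigr\},
\]
where $\boldsymbol Y$ has its $\mu$-marginal. The conditional law of $\widehat{\boldsymbol Y}$ given $\boldsymbol Y$ under $\mu$ is such a test channel. Its expected distortion is exactly $\mathcal L^\mu_{\mathrm{rec}}\le\varepsilon$. It is therefore feasible at level $\varepsilon$, so the infimum is at most $I_\mu(\boldsymbol Y;\widehat{\boldsymbol Y})$.

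The point needing care is the existence of a regular conditional distribution $P_{\widehat{\boldsymbol Y}\mid\boldsymbol Y}$. This follows because $\mathcal Y$ is a finite-dimensional Euclidean space, hence standard Borel. One should also note that $d_{\mathcal Y}$ is a nonnegative measurable distortion, so $R_{\boldsymbol Y,\mu}$ is well defined and nonincreasing. This monotonicity means the bound remains valid if the loss is strictly less than $\varepsilon$.

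\textbf{Main obstacle.} The main obstacle is not computational but definitional. One must ensure that both mutual-information terms and the rate-distortion function are computed under the same joint law $\mu$ of the temporally aligned tuple, rather than under minibatch empirical laws or policy-dependent occupancy measures. One must also handle possibly infinite mutual information in continuous spaces.

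I plan to fix $\mu$ once as the population law of $(\boldsymbol O_k,\boldsymbol H_{k-1},\boldsymbol Y_k)$ and derive every marginal and conditional from it. I will work with the extended-real definition of mutual information, so that the chain of inequalities holds in $[0,\infty]$. Finally, I will add a short remark that if $\boldsymbol W$ is nonsingular and $\varepsilon$ is below the weighted variance, then $R_{\boldsymbol Y,\mu}(\varepsilon)>0$. This links the result to the non-collapse discussion in the main text.
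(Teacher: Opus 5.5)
Your proposal is correct and follows the paper's proof. The paper likewise gets the first inequality from the data-processing inequality for the Markov chain $\boldsymbol Y\to\boldsymbol H\to\widehat{\boldsymbol Y}$ induced by $\widehat{\boldsymbol Y}=g(\boldsymbol H)$, and the second by noting that the reconstruction channel is feasible at distortion level $\varepsilon$ in the definition of $R_{\boldsymbol Y,\mu}(\varepsilon)$. Your measure-theoretic additions go beyond what the paper writes but are consistent with it; note only that the definition you actually use is the informational rate-distortion function, not the operational (coding) one named in your opening sentence.
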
 
	
\begin{proof} 
	Since $\widehat{\boldsymbol Y}=g(\boldsymbol H)$ is a deterministic function of $\boldsymbol H$, the random variables form the Markov chain
	\begin{equation*}
		\boldsymbol Y\longrightarrow\boldsymbol H\longrightarrow\widehat{\boldsymbol Y}.
	\end{equation*}
	
	Therefore, by the data-processing inequality,
	\begin{equation}
		I_\mu(\boldsymbol Y;\boldsymbol H)
		\ge
		I_\mu(\boldsymbol Y;\widehat{\boldsymbol Y}).
		\label{eq:data_processing_reconstruction}
	\end{equation}
	
	By assumption,
	\begin{align*}
		\mathbb E_\mu\left[d_{\mathcal Y}(\boldsymbol Y,\widehat{\boldsymbol Y})\right]
		&=
		\mathbb E_\mu\left[\left\|\boldsymbol W(\boldsymbol Y-\widehat{\boldsymbol Y})\right\|_2^2\right]
		\\
		&\le\varepsilon.
	\end{align*}
	
	By the definition of the rate-distortion function, every reconstruction satisfying this distortion constraint obeys
	\begin{equation}
		I_\mu(\boldsymbol Y;\widehat{\boldsymbol Y})
		\ge
		R_{\boldsymbol Y,\mu}(\varepsilon).
		\label{eq:rate_distortion_lower_bound}
	\end{equation}
	
	Combining \eqref{eq:data_processing_reconstruction} and \eqref{eq:rate_distortion_lower_bound} yields
	\begin{equation*}
		I_\mu(\boldsymbol Y;\boldsymbol H)
		\ge
		I_\mu(\boldsymbol Y;\widehat{\boldsymbol Y})
		\ge
		R_{\boldsymbol Y,\mu}(\varepsilon).
	\end{equation*}
\end{proof}
	
\begin{remark}
	The result concerns the information contained in the representation as a whole. It does not imply that individual coordinates of $\boldsymbol H$ correspond to independent or disentangled physical quantities.
	
	The lower bound is nontrivial only when $\varepsilon$ is below the zero-information distortion level, i.e., below the distortion achievable by a constant predictor that ignores the representation.
\end{remark}
\quad

\begin{proposition}[Additional Privileged Information Cannot Decrease the Optimal Achievable Return]
	\label{prop:privileged_information_monotonicity}
	
	Let $\Pi_{\boldsymbol O}$ denote the class of policies conditioned only on the current nonprivileged observation $\boldsymbol O_k$, and let $\Pi_{\boldsymbol O,\boldsymbol Y}$ denote the class of policies conditioned on both $\boldsymbol O_k$ and the causally available privileged variable $\boldsymbol Y_k$. Define $J_{\boldsymbol O}^*:=\sup_{\pi\in\Pi_{\boldsymbol O}}J(\pi)$ and $J_{\boldsymbol O,\boldsymbol Y}^*:=\sup_{\pi^+\in\Pi_{\boldsymbol O,\boldsymbol Y}}J(\pi^+)$. 
	
	Then every policy in $\Pi_{\boldsymbol O}$ admits a behaviorally equivalent policy in $\Pi_{\boldsymbol O,\boldsymbol Y}$:
	\begin{equation*}
		\Pi_{\boldsymbol O}\subseteq\Pi_{\boldsymbol O,\boldsymbol Y},
	\end{equation*}
	and consequently, 
	\begin{equation}
		J_{\boldsymbol O,\boldsymbol Y}^*\ge J_{\boldsymbol O}^*.
		\label{eq:privileged_return_monotonicity}
	\end{equation}
\end{proposition}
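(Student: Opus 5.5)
The plan is to prove the inclusion by an explicit embedding of decision rules, and then obtain the return inequality by taking suprema. This mirrors the last step of Lemma~\ref{lem:baseline_information_ordering}, where $\Pi_{\boldsymbol O}$ was embedded into $\Pi_{\boldsymbol O,\boldsymbol H}$ by ignoring $\boldsymbol H_{k-1}$.

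First, I fix an arbitrary $\pi\in\Pi_{\boldsymbol O}$. By the definition of the induced classes on $\mathcal X$, it has the form $x\mapsto\bar\pi(\cdot\mid\boldsymbol O(x))$ for some admissible measurable kernel $\bar\pi$. I then define
\begin{equation*}
	\bar\pi^+(\cdot\mid\boldsymbol o,\boldsymbol y):=\bar\pi(\cdot\mid\boldsymbol o),
\end{equation*}
which ignores its second argument. Measurability of $(\boldsymbol o,\boldsymbol y)\mapsto\bar\pi^+(B\mid\boldsymbol o,\boldsymbol y)$ follows because it is the composition of $\bar\pi(B\mid\cdot)$ with the measurable coordinate projection $(\boldsymbol o,\boldsymbol y)\mapsto\boldsymbol o$. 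Hence $\bar\pi^+$ is an admissible decision rule, and the induced policy $\pi^+:x\mapsto\bar\pi^+(\cdot\mid\boldsymbol O(x),\boldsymbol Y(x))$ lies in $\Pi_{\boldsymbol O,\boldsymbol Y}$. Causal availability of $\boldsymbol Y_k$ is guaranteed by Assumption~\ref{assumption:temporal_alignment}, since $\boldsymbol Y_k$ is formed before $\boldsymbol A_k$ is selected. This makes $\Pi_{\boldsymbol O,\boldsymbol Y}$ a well-defined class of nonanticipative policies on the common Markov process $(X_k)$.

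Next, I show that $\pi$ and $\pi^+$ are behaviorally equivalent. For every $x\in\mathcal X$, the two kernels coincide: $\pi^+(\cdot\mid x)=\pi(\cdot\mid x)$. The process $(X_k)$ has a fixed initial law and a transition mechanism that depends only on the current state and action, so by the Ionescu--Tulcea construction the two policies induce the same law on trajectories $(X_0,\boldsymbol A_0,X_1,\ldots)$. The latent rewards $r(s_k,\boldsymbol A_k,s_{k+1})$ are measurable functions of these trajectories and are bounded by $r_{\max}$. Therefore $J(\pi^+)=J(\pi)$, and dominated convergence justifies exchanging the expectation with the discounted sum. This establishes $\Pi_{\boldsymbol O}\subseteq\Pi_{\boldsymbol O,\boldsymbol Y}$ in the sense of behavioral equivalence.

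Finally, for every $\pi\in\Pi_{\boldsymbol O}$ we have $J(\pi)=J(\pi^+)\le J^*_{\boldsymbol O,\boldsymbol Y}$, and taking the supremum over $\pi$ gives \eqref{eq:privileged_return_monotonicity}. The main obstacle is not analytic. It is making precise that the inclusion is meant up to behavioral equivalence rather than literal set equality, and that both classes act on the same augmented Markov process. I will handle this by working entirely with the induced policies on $\mathcal X$ and the projection $\Gamma$, so that no separate comparison of different state spaces is needed.
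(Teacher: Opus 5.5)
Your proposal is correct and follows the paper's proof. Both embed $\pi\in\Pi_{\boldsymbol O}$ into $\Pi_{\boldsymbol O,\boldsymbol Y}$ through the rule $\pi^+(\cdot\mid\boldsymbol o,\boldsymbol y):=\pi(\cdot\mid\boldsymbol o)$, which ignores $\boldsymbol y$. Both then argue that identical action kernels give identical trajectory laws, hence $J(\pi^+)=J(\pi)$, and take suprema. Your measurability and Ionescu--Tulcea remarks only make explicit what the paper leaves implicit.

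One small point: you do not need the bounded-reward assumption, which the paper introduces only later and which is not a hypothesis of this proposition. Nor do you need dominated convergence, because equal trajectory laws already give equal expectations of the discounted-return functional whenever $J$ is defined.
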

\quad
\begin{proof} 
	For any policy $\pi\in\Pi_{\boldsymbol O}$, define a corresponding privileged-information policy $\pi^+\in\Pi_{\boldsymbol O,\boldsymbol Y}$ by 
	\[ \pi^+(\boldsymbol a\mid \boldsymbol o,\boldsymbol y):=\pi(\boldsymbol a\mid \boldsymbol o), \] 
	where the policy $\pi^+$ simply ignores the additional privileged variable $\boldsymbol y$.
	
	Therefore, $\pi$ and $\pi^+$ induce identical conditional action distributions at every decision step, and hence identical trajectory distributions and expected returns: 
	\[ J(\pi^+)=J(\pi). \] 
	
	Since every policy in $\Pi_{\boldsymbol O}$ can be represented in this form, we have 
	\[ \Pi_{\boldsymbol O}\subseteq\Pi_{\boldsymbol O,\boldsymbol Y}. \] 
	
	Taking the supremum over the larger policy class yields \[ J_{\boldsymbol O,\boldsymbol Y}^*\ge J_{\boldsymbol O}^*. \]
	\end{proof}
\begin{remark}
	This proposition establishes only a non-strict inequality. It guarantees that access to the privileged variable cannot decrease the optimal achievable return, but it does not imply that the privileged variable provides a strictly positive performance gain.
\end{remark}
	
\quad

\begin{lemma}[Exact Privileged-Variable Recovery Implies Privileged-Policy Emulation]
	\label{lem:exact_recovery_emulation}

	Assume that the privileged variable is exactly recoverable from the preceding shared representation, i.e., $\boldsymbol Y_k=g(\boldsymbol H_{k-1})$ for every admissible temporally aligned history. Assume further that, for every privileged-information policy $\pi^+\in\Pi_{\boldsymbol O,\boldsymbol Y}$, the hidden-state policy class $\Pi_{\boldsymbol O,\boldsymbol H}$ is sufficiently expressive to represent the composed policy
	\[ (\boldsymbol o,\boldsymbol h)\mapsto\pi^+(\cdot\mid \boldsymbol o,g(\boldsymbol h)). \]
	
	Then every privileged-information policy $\pi^+\in\Pi_{\boldsymbol O,\boldsymbol Y}$ can be exactly emulated by a hidden-state policy $\tilde{\pi}\in\Pi_{\boldsymbol O,\boldsymbol H}$. In particular,
	\begin{equation}
		\Pi_{\boldsymbol O,\boldsymbol Y}
		\subseteq
		\Pi_{\boldsymbol O,\boldsymbol H},
		\qquad
		J_{\boldsymbol O,\boldsymbol H}^*
		\ge
		J_{\boldsymbol O,\boldsymbol Y}^*.
		\label{eq:exact_recovery_emulation}
	\end{equation}
\end{lemma}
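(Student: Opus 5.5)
The plan is a direct construction, patterned on the proofs of Proposition~\ref{prop:privileged_information_monotonicity} and the middle inequality of Lemma~\ref{lem:baseline_information_ordering}. All policy classes are compared on the common augmented Markov state $X_k$. Two policies induce the same trajectory law, and hence the same return, whenever they assign the same conditional action distribution to every reachable $X_k$.

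First, fix an arbitrary $\pi^+\in\Pi_{\boldsymbol O,\boldsymbol Y}$ and define the composed decision rule
\[
\widetilde\pi(\cdot\mid\boldsymbol o,\boldsymbol h):=\pi^+\bigl(\cdot\mid\boldsymbol o,g(\boldsymbol h)\bigr).
\]
I will check that $\widetilde\pi$ is an admissible measurable rule. This needs $g$ to be measurable, which it is for the implemented network. It also needs the composition of a Markov kernel with a measurable map to remain a Markov kernel. By the expressivity hypothesis, $\widetilde\pi\in\Pi_{\boldsymbol O,\boldsymbol H}$.

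Second, I will show the two induced policies agree on the common process. Since $\widetilde\pi$ reads $(\boldsymbol O(x),\boldsymbol H(x))$, its action law at epoch $k$ is $\pi^+(\cdot\mid\boldsymbol O_k,g(\boldsymbol H_{k-1}))$. The exact-recovery hypothesis gives $g(\boldsymbol H_{k-1})=\boldsymbol Y_k$, so this equals $\pi^+(\cdot\mid\boldsymbol O_k,\boldsymbol Y_k)$. The two induced kernels on $\mathcal X$ therefore coincide at every admissible state.

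Third, I will argue by induction on $k$ that the laws of $(X_0,\boldsymbol A_0,\dots,X_k)$ agree. The initial law $\rho_0$ and the reset conventions of Assumption~\ref{assumption:temporal_alignment} are shared. At each step the two action kernels agree and the transition of $X$ given the action is common. Hence the full trajectory laws match, $J(\widetilde\pi)=J(\pi^+)$, and $\pi^+$ is behaviorally contained in $\Pi_{\boldsymbol O,\boldsymbol H}$. Taking suprema gives $J^*_{\boldsymbol O,\boldsymbol H}\ge J^*_{\boldsymbol O,\boldsymbol Y}$.

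The main subtlety is the induction step. The identity $g(\boldsymbol H_{k-1})=\boldsymbol Y_k$ is assumed only on admissible histories, so I must confirm that the histories reached under $\widetilde\pi$ are admissible. This is not automatic, because $\boldsymbol H_{k-1}$ depends on the recurrent encoder driven by the realized observations. I expect it to follow because the actor's hidden-state update $F$ is fixed, and the process reached under $\widetilde\pi$ is, step by step, the same process reached under $\pi^+$. Concretely, I would phrase the recovery hypothesis as holding almost surely under every policy's occupancy on $\mathcal X$. Agreement can then be propagated forward epoch by epoch: the law of $X_k$ coincides under both policies, so the recovery identity holds on the reached support, which forces equal action kernels and hence equal laws of $X_{k+1}$.
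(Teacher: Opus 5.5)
Your proposal is correct and is essentially the paper's proof: define $\widetilde\pi(\cdot\mid\boldsymbol o,\boldsymbol h):=\pi^+(\cdot\mid\boldsymbol o,g(\boldsymbol h))$, invoke expressivity, use $g(\boldsymbol H_{k-1})=\boldsymbol Y_k$ to equate the action kernels at every epoch, induct to equal trajectory laws and returns, and take suprema. The lemma's pointwise hypothesis over every admissible history already settles your concern about reached histories; your forward-propagation argument is a harmless sharpening, showing the identity is needed only almost surely along the process generated by $\pi^+$.
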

\quad
\begin{proof}
	For an arbitrary privileged-information policy $\pi^+\in\Pi_{\boldsymbol O,\boldsymbol Y}$, define the corresponding hidden-state policy as \[ \tilde{\pi}(\boldsymbol a\mid \boldsymbol o,\boldsymbol h) := \pi^+(\boldsymbol a\mid \boldsymbol o,g(\boldsymbol h)). \] 
	
	By the expressivity assumption, $\tilde{\pi}\in\Pi_{\boldsymbol O,\boldsymbol H}$. 
	
	Since $g(\boldsymbol H_{k-1})=\boldsymbol Y_k$, for every action $\boldsymbol a$ and every decision epoch $k$, 
	\[ \tilde{\pi}(\boldsymbol a\mid \boldsymbol O_k,\boldsymbol H_{k-1}) = \pi^+(\boldsymbol a\mid \boldsymbol O_k,\boldsymbol Y_k). \]
	
	Thus, the two policies induce identical conditional action distributions at every step. By induction over time, they induce identical trajectory distributions and therefore identical expected returns: \[ J(\tilde{\pi})=J(\pi^+). \]
	
	Since this construction applies to every $\pi^+\in\Pi_{\boldsymbol O,\boldsymbol Y}$, it follows that 
	\[ \Pi_{\boldsymbol O,\boldsymbol Y}\subseteq\Pi_{\boldsymbol O,\boldsymbol H}. \]
	
	Therefore, \[ J_{\boldsymbol O,\boldsymbol H}^* = \sup_{\pi\in\Pi_{\boldsymbol O,\boldsymbol H}}J(\pi) \ge \sup_{\pi^+\in\Pi_{\boldsymbol O,\boldsymbol Y}}J(\pi^+) = J_{\boldsymbol O,\boldsymbol Y}^*. \]
\end{proof}

\begin{remark}
	For a restricted neural-network policy class, an additional approximation error may arise if the composed policy is not represented exactly.
\end{remark}

\begin{corollary}[Exact Recovery Preserves the Nonprivileged Achievable Return]
	\label{corollary:exact_recovery_no_loss}
	Under the assumptions of Proposition~\ref{prop:privileged_information_monotonicity} and Lemma~\ref{lem:exact_recovery_emulation},
	\begin{equation}
		J_{\boldsymbol O,\boldsymbol H}^{*}
		\ge
		J_{\boldsymbol O,\boldsymbol Y}^{*}
		\ge
		J_{\boldsymbol O}^{*}.
		\label{eq:exact_recovery_return_order_appendix}
	\end{equation}
\end{corollary}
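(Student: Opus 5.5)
The corollary follows by chaining the two inequalities already established in this appendix. No new construction is needed, only a check that both results apply simultaneously on the common augmented process $(X_k)$ and its induced policy classes.

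\emph{Step 1 (the right-hand inequality).} Invoke Proposition~\ref{prop:privileged_information_monotonicity}. Any $\bar\pi\in\Pi_{\boldsymbol O}$ is realized in $\Pi_{\boldsymbol O,\boldsymbol Y}$ by the decision rule $\bar\pi^+(\cdot\mid\boldsymbol o,\boldsymbol y):=\bar\pi(\cdot\mid\boldsymbol o)$, which ignores $\boldsymbol y$. The two policies induce the same action kernel on $\mathcal X$, hence the same trajectory law and return. Taking suprema gives $J_{\boldsymbol O,\boldsymbol Y}^{*}\ge J_{\boldsymbol O}^{*}$.

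\emph{Step 2 (the left-hand inequality).} Invoke Lemma~\ref{lem:exact_recovery_emulation}, using its exact-recovery hypothesis $\boldsymbol Y_k=g(\boldsymbol H_{k-1})$ and its expressivity hypothesis. For every $\pi^+\in\Pi_{\boldsymbol O,\boldsymbol Y}$, the composed rule $\widetilde\pi(\cdot\mid\boldsymbol o,\boldsymbol h):=\pi^+(\cdot\mid\boldsymbol o,g(\boldsymbol h))$ lies in $\Pi_{\boldsymbol O,\boldsymbol H}$. Evaluated along the process, it coincides with $\pi^+$ at every epoch. This yields $J_{\boldsymbol O,\boldsymbol H}^{*}\ge J_{\boldsymbol O,\boldsymbol Y}^{*}$.

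\emph{Step 3 (concatenation).} Concatenate the two inequalities to obtain \eqref{eq:exact_recovery_return_order_appendix}. As a by-product, prepending Lemma~\ref{lem:baseline_information_ordering} gives the full chain \eqref{eq:exact_recovery_return_order} used in the main text.

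The only point needing care, which I regard as the main (mild) obstacle, is compatibility of the two settings. Both results must be stated on the same augmented Markov state $X_k=(s_k,\tau_k,\boldsymbol H_{k-1},\boldsymbol Y_k)$ with the same coordinate projections $\Gamma$. Otherwise the three suprema would be taken over policies acting on different processes and could not be compared.

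A subtler issue is that $\boldsymbol H_{k-1}$ is produced by a fixed encoder, so it does not depend on the policy being optimized. Consequently, changing the decision rule does not alter the map $\tau_k\mapsto\boldsymbol H_{k-1}$, and the identity $\boldsymbol Y_k=g(\boldsymbol H_{k-1})$ must hold on every history reachable under any admissible policy, not only on $\mu$. The exact-recovery hypothesis of Lemma~\ref{lem:exact_recovery_emulation} is stated for every admissible temporally aligned history, and I will note explicitly that this is what makes the emulation step valid uniformly over $\Pi_{\boldsymbol O,\boldsymbol Y}$.
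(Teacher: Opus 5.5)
Your proposal is correct and matches the paper's proof, which obtains the left inequality from Lemma~\ref{lem:exact_recovery_emulation}, obtains the right inequality from Proposition~\ref{prop:privileged_information_monotonicity}, and concatenates the two. Your remarks that both results must live on the common augmented state $X_k$, and that $\boldsymbol Y_k=g(\boldsymbol H_{k-1})$ must hold on every admissible history rather than only $\mu$-almost everywhere, are sound clarifications that the paper leaves implicit.
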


\begin{proof}
	The first inequality follows from Lemma~\ref{lem:exact_recovery_emulation}, and the second follows from Proposition~\ref{prop:privileged_information_monotonicity}.
\end{proof}
\quad

\subsection{Approximate Policy Emulation and Return Guarantees}
\label{app:return_guarantees}

Exact recovery is generally unavailable in learned systems. To extend the exact-emulation result to approximate recovery,
we introduce the following regularity assumptions.

\begin{assumption}[Bounded Reward]
	\label{assumption:Bounded_reward}
	The one-step reward is uniformly bounded, i.e.,
	\begin{equation*}
		|r(s,\boldsymbol a,s')|\le r_{\max},
		\qquad
		\forall (s,\boldsymbol a,s')\in\mathcal S\times\mathcal A\times\mathcal S,
	\end{equation*}
	for some finite constant $r_{\max}>0$. 
\end{assumption}

\begin{assumption}[Lipschitz Privileged-Information Policy] 
	\label{assumption:Lipschitz}
	
	Let $\pi^+\in\Pi_{\boldsymbol O,\boldsymbol Y}$ be a privileged-information policy. There exists a finite policy-dependent constant $L_{\pi^+}$ such that, for all policy observations $\boldsymbol o\in\mathcal O$ and all $\boldsymbol y,\boldsymbol y'\in\mathcal Y$,
	\begin{equation*}
		D_{\mathrm{TV}} \left( \pi^+(\cdot\mid \boldsymbol o,\boldsymbol y), \pi^+(\cdot\mid \boldsymbol o,\boldsymbol y') \right)
		\le L_{\pi^+}\|\boldsymbol W(\boldsymbol y-\boldsymbol y')\|_2.
	\end{equation*}
	The total variation distance~\cite{schulman2017a} between two continuous action distributions with densities $p$ and $q$ is defined as
	\begin{equation*}
		D_{\mathrm{TV}}(P,Q)
		:= \frac{1}{2} \int_{\mathcal A} |p(\boldsymbol a)-q(\boldsymbol a)|\,\mathrm d\boldsymbol a.
	\end{equation*}
\end{assumption}

\begin{assumption}
	[Training-Distribution Coverage]
	\label{assumption:Distributional_coverage}
	
	Let $\nu_{\pi^+}:=\Gamma_{\#}\rho_{\pi^+}^X$ denote the pushforward of the normalized discounted state-occupancy distribution onto the temporally aligned tuple $(\boldsymbol O,\boldsymbol H,\boldsymbol Y)$. Equivalently,
	for every measurable $B\subseteq\mathcal O\times\mathcal H\times\mathcal Y$,
	\begin{equation*}
		\nu_{\pi^+}(B)
		:=
		(1-\gamma)
		\sum_{k=0}^{\infty}
		\gamma^k
		\Pr_{\pi^+}
		\left(
		(\boldsymbol O_k,\boldsymbol H_{k-1},\boldsymbol Y_k)\in B
		\right).
	\end{equation*}
	
	The distribution $\nu_{\pi^+}$ is absolutely continuous with
	respect to $\mu$, with a uniformly bounded density ratio:
	\[
	\frac{\mathrm d\nu_{\pi^+}}{\mathrm d\mu}\le C_{\pi^+,\mu}
	\quad \mu\text{-almost everywhere},
	\]
	for some finite policy-dependent constant $C_{\pi^+,\mu}$.
	
\end{assumption}

\begin{theorem}[Approximate Reconstruction Implies a Bounded Privileged-Policy Emulation Error]
	\label{theorem:approximate_policy_emulation} Under Assumptions~\ref{assumption:temporal_alignment}--\ref{assumption:Distributional_coverage}, let $\pi^+\in\Pi_{\boldsymbol O,\boldsymbol Y}$ be a privileged-information policy with constants $L_{\pi^+}$ and $C_{\pi^+,\mu}$. Suppose that \[ \mathcal L_{\mathrm{rec}}^\mu\le\varepsilon. \] 
	
	Define the reconstruction-induced hidden-state policy at decision epoch $k$ as \[ \tilde{\pi}_g(\boldsymbol a_k\mid \boldsymbol o_k,\boldsymbol h_{k-1}) := \pi^+(\boldsymbol a_k\mid \boldsymbol o_k,g(\boldsymbol h_{k-1})), \] where $g(\boldsymbol h_{k-1})$ reconstructs the privileged variable $\boldsymbol Y_k$ from the preceding hidden representation. Suppose that the composed policy belongs to the hidden-state policy class, i.e., $\tilde{\pi}_g\in\Pi_{\boldsymbol O,\boldsymbol H}$. Then 
	\begin{equation}
		\left| J(\pi^+)-J(\tilde{\pi}_g) \right| \le \frac{2r_{\max}L_{\pi^+}}{(1-\gamma)^2} \sqrt{C_{\pi^+,\mu}\varepsilon}.
		\label{eq:approximate_policy_emulation_bound}
	\end{equation} 
\end{theorem}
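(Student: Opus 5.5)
The argument combines a performance-difference (simulation-lemma) bound on the common Markov process $(X_k)$ with a change of measure from the occupancy of $\pi^+$ to the training distribution $\mu$. Both $\pi^+$ and $\tilde\pi_g$ are induced policies on $\mathcal X$: their decision rules are measurable functions of $\Gamma(X_k)$, and $X_k$ evolves under the same kernel given the action. So they are two stationary Markov policies on one MDP, whose rewards are bounded by $r_{\max}$ under Assumption~\ref{assumption:Bounded_reward}. Hence $|V^{\pi}|\le r_{\max}/(1-\gamma)$ for both policies, and both $Q$-functions (expected reward plus discounted continuation) are bounded by the same constant.

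First apply the performance difference lemma with $\pi^+$ as the reference occupancy:
\[
J(\pi^+)-J(\tilde\pi_g)=\frac{1}{1-\gamma}\,\mathbb E_{x\sim\rho^X_{\pi^+}}\Bigl[\textstyle\int Q^{\tilde\pi_g}(x,\boldsymbol a)\,\bigl(\pi^+-\tilde\pi_g\bigr)(\mathrm d\boldsymbol a\mid x)\Bigr].
\]
Equivalently, use a telescoping coupling argument that bounds the discrepancy in per-step action distributions. For a signed difference of probability measures, $|\int f\,\mathrm d(P-Q)|\le 2\|f\|_\infty D_{\mathrm{TV}}(P,Q)$, and the centered version gives the same order. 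Together with the $Q$-bound $r_{\max}/(1-\gamma)$, this yields
\[
|J(\pi^+)-J(\tilde\pi_g)|\le\frac{2r_{\max}}{(1-\gamma)^2}\,\mathbb E_{\rho^X_{\pi^+}}\bigl[D_{\mathrm{TV}}(\pi^+(\cdot\mid\boldsymbol O,\boldsymbol Y),\pi^+(\cdot\mid\boldsymbol O,g(\boldsymbol H)))\bigr].
\]
Next apply Assumption~\ref{assumption:Lipschitz} pointwise with $\boldsymbol y=\boldsymbol Y$ and $\boldsymbol y'=g(\boldsymbol H)=\widehat{\boldsymbol Y}$. This bounds the integrand by $L_{\pi^+}\|\boldsymbol W(\boldsymbol Y-\widehat{\boldsymbol Y})\|_2$. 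The integrand depends on $x$ only through $\Gamma(x)$, so the expectation is taken under the pushforward $\nu_{\pi^+}$.

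The step needing the most care is moving from $\nu_{\pi^+}$ to $\mu$ while keeping the $\sqrt{\varepsilon}$ rate. Use Jensen/Cauchy--Schwarz under $\nu_{\pi^+}$, then the density bound of Assumption~\ref{assumption:Distributional_coverage}:
\[
\mathbb E_{\nu_{\pi^+}}\bigl[\|\boldsymbol W(\boldsymbol Y-\widehat{\boldsymbol Y})\|_2\bigr]\le\Bigl(\mathbb E_{\nu_{\pi^+}}\bigl[d_{\mathcal Y}(\boldsymbol Y,\widehat{\boldsymbol Y})\bigr]\Bigr)^{1/2}\le\Bigl(C_{\pi^+,\mu}\,\mathbb E_{\mu}\bigl[d_{\mathcal Y}\bigr]\Bigr)^{1/2}\le\sqrt{C_{\pi^+,\mu}\varepsilon}.
\]
The second inequality uses $\frac{\mathrm d\nu_{\pi^+}}{\mathrm d\mu}\le C_{\pi^+,\mu}$ and the nonnegativity of $d_{\mathcal Y}$. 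Applying Jensen first, under $\nu_{\pi^+}$, is what produces $\sqrt{C}$ rather than $C$. Combining the three displays gives \eqref{eq:approximate_policy_emulation_bound}.

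Two technical points need checking. First, the difference lemma must be applied on $\mathcal X$, where Markovianity holds, rather than on observations. Second, the $Q$-bound constant must absorb the factor $2$ from the TV duality. If the centered bound loses a factor, bounding $\|Q\|_\infty\le r_{\max}/(1-\gamma)$ directly with the uncentered duality still gives exactly the stated constant. Measurability of $g$ and of the composed kernel $\tilde\pi_g$ is assumed, via the membership $\tilde\pi_g\in\Pi_{\boldsymbol O,\boldsymbol H}$.
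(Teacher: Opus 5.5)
Your proposal is correct and follows essentially the same route as the paper. Both arguments apply the performance-difference lemma on the augmented Markov state space $\mathcal X$ with $\rho^X_{\pi^+}$ as the reference occupancy. They then use $|\int f\,\mathrm d(P-Q)|\le 2\|f\|_\infty D_{\mathrm{TV}}(P,Q)$ with $\|Q^{\tilde\pi_g}\|_\infty\le r_{\max}/(1-\gamma)$, invoke the Lipschitz assumption pointwise, and finish with Cauchy--Schwarz under $\nu_{\pi^+}$ plus the density-ratio bound on the squared error to obtain $\sqrt{C_{\pi^+,\mu}\varepsilon}$. The paper only orders the steps differently, transferring the squared error from $\mu$ to $\nu_{\pi^+}$ before the PDL step. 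Your hedge about the centered bound is unnecessary, since the uncentered duality you fall back on is exactly the one the paper uses.
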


\begin{proof} 
	By Assumption~\ref{assumption:temporal_alignment}, $\boldsymbol O_k$, $\boldsymbol H_{k-1}$, and $\boldsymbol Y_k$ are functions of the common Markov state $X_k$, and unindexed $(\boldsymbol O,\boldsymbol H,\boldsymbol Y)$ below denotes this aligned tuple. We first transfer the reconstruction guarantee from the training distribution $\mu$ to the aligned-tuple occupancy distribution $\nu_{\pi^+}$. By Assumption~\ref{assumption:Distributional_coverage}, $\nu_{\pi^+}$ is absolutely continuous with respect to $\mu$, and \[ \frac{\mathrm d\nu_{\pi^+}}{\mathrm d\mu} \le C_{\pi^+,\mu} \] $\mu$-almost everywhere. Therefore, the expected squared recovery error under $\nu_{\pi^+}$ is bounded by \begin{align} &\mathbb E_{\nu_{\pi^+}} \left[ \|\boldsymbol W(\boldsymbol Y-g(\boldsymbol H))\|_2^2 \right] \nonumber\\ &\quad = \int \|\boldsymbol W(\boldsymbol y-g(\boldsymbol h))\|_2^2 \frac{\mathrm d\nu_{\pi^+}}{\mathrm d\mu} \,\mathrm d\mu \nonumber\\ &\quad \le C_{\pi^+,\mu} \int \|\boldsymbol W(\boldsymbol y-g(\boldsymbol h))\|_2^2 \,\mathrm d\mu \nonumber\\ &\quad = C_{\pi^+,\mu}\mathcal L_{\mathrm{rec}}^\mu \le C_{\pi^+,\mu}\varepsilon. \label{eq:occupancy_recovery_bound} \end{align} 
	
	Applying the Cauchy-Schwarz inequality to \eqref{eq:occupancy_recovery_bound}, the corresponding expected recovery error satisfies \begin{align} \mathbb E_{\nu_{\pi^+}} \left[ \|\boldsymbol W(\boldsymbol Y-g(\boldsymbol H))\|_2 \right] &\le \sqrt{ \mathbb E_{\nu_{\pi^+}} \left[ \|\boldsymbol W(\boldsymbol Y-g(\boldsymbol H))\|_2^2 \right] } \nonumber\\ &\le \sqrt{C_{\pi^+,\mu}\varepsilon}. \label{eq:expected_recovery_bound} \end{align} 
	
	We next translate the privileged-variable recovery error into a difference between the two action distributions. By definition, 
	\[ 
	\tilde{\pi}_g(\cdot\mid \boldsymbol O,\boldsymbol H) = \pi^+(\cdot\mid \boldsymbol O,g(\boldsymbol H)). 
	\] 
	
	Using the Lipschitz condition in Assumption~\ref{assumption:Lipschitz}, we obtain \begin{align*} &D_{\mathrm{TV}} \left( \pi^+(\cdot\mid \boldsymbol O,\boldsymbol Y), \tilde{\pi}_g(\cdot\mid \boldsymbol O,\boldsymbol H) \right) \\ &\quad = D_{\mathrm{TV}} \left( \pi^+(\cdot\mid \boldsymbol O,\boldsymbol Y), \pi^+(\cdot\mid \boldsymbol O,g(\boldsymbol H)) \right) \\ &\quad \le L_{\pi^+}\|\boldsymbol W(\boldsymbol Y-g(\boldsymbol H))\|_2. \end{align*} 
	
	Because $\nu_{\pi^+}=\Gamma_{\#}\rho_{\pi^+}^X$, taking expectations and applying \eqref{eq:expected_recovery_bound} gives
	\begin{align}
		&\mathbb E_{\rho_{\pi^+}^X}\!\left[
		D_{\mathrm{TV}}\!\left(
		\pi^+(\cdot\mid X),
		\tilde{\pi}_g(\cdot\mid X)
		\right)\right] \nonumber\\
		&\quad =
		\mathbb E_{\nu_{\pi^+}}\!\left[
		D_{\mathrm{TV}}\!\left(
		\pi^+(\cdot\mid \boldsymbol O,\boldsymbol Y),
		\tilde{\pi}_g(\cdot\mid \boldsymbol O,\boldsymbol H)
		\right)\right] \nonumber\\
		&\quad \le
		L_{\pi^+}\sqrt{C_{\pi^+,\mu}\varepsilon}.
		\label{eq:expected_policy_tv_bound}
	\end{align}
	
	It remains to translate the difference between the action distributions into a return difference. Both $\pi^+$ and $\tilde{\pi}_g$ are induced policies on the same history-induced MDP: \[ \pi^+(\cdot\mid X) = \pi^+(\cdot\mid \boldsymbol O(X),\boldsymbol Y(X)), \] and \[ \tilde{\pi}_g(\cdot\mid X) = \pi^+(\cdot\mid \boldsymbol O(X),g(\boldsymbol H(X))). \] 
	
	Define their action-value difference at $X$ as
	\begin{align*}
		\Delta_Q(X)
		&:= \mathbb E_{\boldsymbol a\sim\pi^+(\cdot\mid X)}
		\left[Q^{\tilde{\pi}_g}(X,\boldsymbol a)\right] \\
		&\quad - \mathbb E_{\boldsymbol a\sim\tilde{\pi}_g(\cdot\mid X)}
		\left[Q^{\tilde{\pi}_g}(X,\boldsymbol a)\right].
	\end{align*}
	
	Applying the performance-difference lemma~\cite{Kakade2002,schulman2017a} with $\pi'=\pi^+$ and $\pi=\tilde{\pi}_g$ gives
	\begin{equation}
		J(\pi^+)-J(\tilde{\pi}_g)
		= \frac{1}{1-\gamma}
		\mathbb E_{\rho_{\pi^+}^X}\!\left[\Delta_Q(X)\right].
		\label{eq:performance_difference_q}
	\end{equation}
	
	Here, the advantage function in the standard performance-difference lemma has been expanded using \[ A^{\tilde{\pi}_g}(X,\boldsymbol a) = Q^{\tilde{\pi}_g}(X,\boldsymbol a) - V^{\tilde{\pi}_g}(X), \] together with \[ V^{\tilde{\pi}_g}(X) = \mathbb E_{\boldsymbol a\sim\tilde{\pi}_g(\cdot\mid X)} \left[ Q^{\tilde{\pi}_g}(X,\boldsymbol a) \right]. \] 
	
	For any bounded function $f$ and any probability distributions $P$ and $Q$, 
	\[ \left| \mathbb E_P[f]-\mathbb E_Q[f] \right| \le 2\sup_{\boldsymbol a}|f(\boldsymbol a)|D_{\mathrm{TV}}(P,Q). \] 
	
	In the present case, we take $f(\boldsymbol a)=Q^{\tilde{\pi}_g}(X,\boldsymbol a)$. 
	
	By Assumption~\ref{assumption:Bounded_reward}, \begin{align} \left| Q^{\tilde{\pi}_g}(X,\boldsymbol a) \right| &= \left| \mathbb E_{\tilde{\pi}_g} \left[ \sum_{\ell=0}^{\infty} \gamma^\ell r_{k+\ell} \;\middle|\; X_k=X,\ \boldsymbol a_k=\boldsymbol a \right] \right| \nonumber\\ &\le \mathbb E_{\tilde{\pi}_g} \left[ \sum_{\ell=0}^{\infty} \gamma^\ell|r_{k+\ell}| \;\middle|\; X_k=X,\ \boldsymbol a_k=\boldsymbol a \right] \nonumber\\ &\le \sum_{\ell=0}^{\infty} \gamma^\ell r_{\max} = \frac{r_{\max}}{1-\gamma}. \label{eq:q_function_bound} \end{align} 
	
	Combining the total-variation inequality, \eqref{eq:performance_difference_q}, and \eqref{eq:q_function_bound}, we obtain
	\begin{align*}
		\left| J(\pi^+)-J(\tilde{\pi}_g) \right|
		&\le \frac{2r_{\max}}{(1-\gamma)^2}
		\mathbb E_{\rho_{\pi^+}^X} \Big[
		D_{\mathrm{TV}} \big(
		\pi^+(\cdot\mid X), \\
		&\hspace{34mm}
		\tilde{\pi}_g(\cdot\mid X)
		\big)\Big] \\
		&\le \frac{2r_{\max}L_{\pi^+}}{(1-\gamma)^2}
		\sqrt{C_{\pi^+,\mu}\varepsilon},
	\end{align*}
	where the final inequality follows by substituting the expected total-variation bound in \eqref{eq:expected_policy_tv_bound}. This proves the claimed privileged-policy emulation bound. 
\end{proof}

\quad

For a fixed reconstruction mapping $g$ and finite constants $L$ and $C$, let $\Pi_{\boldsymbol O,\boldsymbol Y}^{L,C}\subseteq\Pi_{\boldsymbol O,\boldsymbol Y}$ denote the class of privileged-information policies for which Assumption~\ref{assumption:Lipschitz} holds with $L_{\pi^+}\le L$, Assumption~\ref{assumption:Distributional_coverage} holds with $C_{\pi^+,\mu}\le C$, and the reconstruction-induced policy $\tilde{\pi}_g$ belongs to $\Pi_{\boldsymbol O,\boldsymbol H}$.

\begin{corollary}[Achievable-Return Bound under Approximate Privileged-Variable Recovery]
	\label{corollary:achievable_return_bound} 
	Define \[ J_{\boldsymbol O,\boldsymbol Y}^{*,L,C} := \sup_{\pi^+\in\Pi_{\boldsymbol O,\boldsymbol Y}^{L,C}}J(\pi^+), \quad J_{\boldsymbol O,\boldsymbol H}^{*} := \sup_{\pi\in\Pi_{\boldsymbol O,\boldsymbol H}}J(\pi). \] 
	
	If $\mathcal L_{\mathrm{rec}}^\mu\le\varepsilon$, then
	\begin{equation}
		J_{\boldsymbol O,\boldsymbol Y}^{*,L,C}-J_{\boldsymbol O,\boldsymbol H}^{*} \le \frac{2r_{\max}L}{(1-\gamma)^2} \sqrt{C\varepsilon}.
		\label{eq:restricted_class_return_bound}
	\end{equation}
\end{corollary}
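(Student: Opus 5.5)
The corollary follows from Theorem~\ref{theorem:approximate_policy_emulation} by a supremum argument. Write
\[
\delta:=\frac{2r_{\max}L}{(1-\gamma)^2}\sqrt{C\varepsilon}.
\]
The plan is to show that every policy in the restricted privileged class $\Pi_{\boldsymbol O,\boldsymbol Y}^{L,C}$ is matched, up to an additive error $\delta$, by some member of $\Pi_{\boldsymbol O,\boldsymbol H}$. Taking suprema then transfers this pointwise comparison to the optimal achievable returns.

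First, fix an arbitrary $\pi^+\in\Pi_{\boldsymbol O,\boldsymbol Y}^{L,C}$. By definition of this class:
\begin{itemize}
\item Assumption~\ref{assumption:Lipschitz} holds with a constant $L_{\pi^+}\le L$.
\item Assumption~\ref{assumption:Distributional_coverage} holds with a constant $C_{\pi^+,\mu}\le C$.
\item The reconstruction-induced policy $\tilde{\pi}_g(\cdot\mid\boldsymbol o,\boldsymbol h)=\pi^+(\cdot\mid\boldsymbol o,g(\boldsymbol h))$ lies in $\Pi_{\boldsymbol O,\boldsymbol H}$.
\end{itemize}
Assumptions~\ref{assumption:temporal_alignment} and~\ref{assumption:Bounded_reward} are standing assumptions, and $\mathcal L_{\mathrm{rec}}^\mu\le\varepsilon$ is given. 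All hypotheses of Theorem~\ref{theorem:approximate_policy_emulation} therefore hold, and it gives
\[
J(\pi^+)-J(\tilde{\pi}_g)\le\frac{2r_{\max}L_{\pi^+}}{(1-\gamma)^2}\sqrt{C_{\pi^+,\mu}\varepsilon}.
\]
The right-hand side is nondecreasing in both $L_{\pi^+}$ and $C_{\pi^+,\mu}$, since $\sqrt{\cdot}$ is monotone and all factors are nonnegative. Hence $J(\pi^+)-J(\tilde\pi_g)\le\delta$.

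Second, since $\tilde{\pi}_g\in\Pi_{\boldsymbol O,\boldsymbol H}$, we have $J(\tilde{\pi}_g)\le J_{\boldsymbol O,\boldsymbol H}^{*}$. Combining with the first step gives $J(\pi^+)\le J_{\boldsymbol O,\boldsymbol H}^{*}+\delta$ for every $\pi^+\in\Pi_{\boldsymbol O,\boldsymbol Y}^{L,C}$. Taking the supremum over this class yields $J_{\boldsymbol O,\boldsymbol Y}^{*,L,C}\le J_{\boldsymbol O,\boldsymbol H}^{*}+\delta$, which is \eqref{eq:restricted_class_return_bound}.

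The argument has no real obstacle; the only care needed is bookkeeping.
\begin{itemize}
\item Only the one-sided inequality from the absolute-value bound of the theorem is used.
\item Both suprema are finite by Assumption~\ref{assumption:Bounded_reward}, since all returns lie in $[-r_{\max}/(1-\gamma),\,r_{\max}/(1-\gamma)]$. The subtraction is therefore well defined.
\item If $\Pi_{\boldsymbol O,\boldsymbol Y}^{L,C}$ is empty, the supremum is $-\infty$ and the bound holds trivially.
\end{itemize}
The substantive step, the uniform replacement of the policy-dependent constants by $L$ and $C$, is exactly what the class definition is built to provide.
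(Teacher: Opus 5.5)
Your proof is correct and matches the paper's. The paper picks a near-optimal policy in $\Pi_{\boldsymbol O,\boldsymbol Y}^{L,C}$, within an arbitrary slack of the supremum, applies Theorem~\ref{theorem:approximate_policy_emulation}, bounds $J(\tilde\pi_g)$ by $J_{\boldsymbol O,\boldsymbol H}^{*}$, and lets the slack vanish, which is equivalent to your pointwise bound followed by a supremum; your monotonicity step replacing $L_{\pi^+}$ and $C_{\pi^+,\mu}$ by $L$ and $C$ is one the paper leaves implicit.
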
 

\begin{proof} 
	For any $\delta>0$, choose a policy $\pi_\delta^+\in\Pi_{\boldsymbol O,\boldsymbol Y}^{L,C}$ satisfying \[ J(\pi_\delta^+) \ge J_{\boldsymbol O,\boldsymbol Y}^{*,L,C}-\delta. \] 
	
	By the definition of $\Pi_{\boldsymbol O,\boldsymbol Y}^{L,C}$ and Theorem~\ref{theorem:approximate_policy_emulation}, there exists a corresponding hidden-state policy $\tilde{\pi}_{\delta,g}\in\Pi_{\boldsymbol O,\boldsymbol H}$ such that \[ J(\pi_\delta^+)-J(\tilde{\pi}_{\delta,g}) \le \frac{2r_{\max}L}{(1-\gamma)^2} \sqrt{C\varepsilon}. \] 
	
	Since $J_{\boldsymbol O,\boldsymbol H}^*$ is the supremum over all policies in $\Pi_{\boldsymbol O,\boldsymbol H}$, \[ J_{\boldsymbol O,\boldsymbol H}^* \ge J(\tilde{\pi}_{\delta,g}). \] 
	
	Therefore, \[ J_{\boldsymbol O,\boldsymbol Y}^{*,L,C}-J_{\boldsymbol O,\boldsymbol H}^{*} \le \delta + \frac{2r_{\max}L}{(1-\gamma)^2} \sqrt{C\varepsilon}. \] 
	
	Letting $\delta\to0$ proves the claimed inequality.
\end{proof}

\quad

\onecolumn
\section{Supplementary Experiments}

\FloatBarrier
\subsection{Supplementary Quadruped Representation Probes}

We apply the frozen-actor ridge-probe protocol described in Section~\ref{sec:information_retention} to the quadruped robot to test whether the humanoid representation-retention pattern persists under different body geometry and contact dynamics.

\begin{figure}[H]
	\centering
	\includegraphics[width=0.92\textwidth]{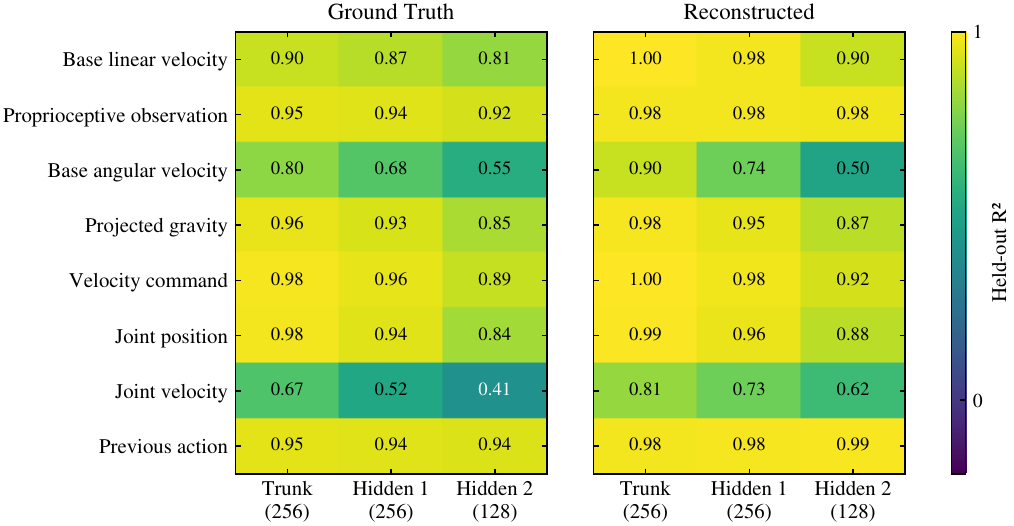}
	\caption{Held-out linear-probe $R^2$ for robot-related targets on the Unitree Go1 at successive control-pathway layers. Ground Truth and Reconstructed report the layerwise decodability of their respective targets. The aggregate proprioceptive observation and previous action remain strongly decodable through Hidden~2.}
	\label{fig:appendix_go1_state_probe}
\end{figure}

\begin{figure}[H]
	\centering
	\includegraphics[width=0.92\textwidth]{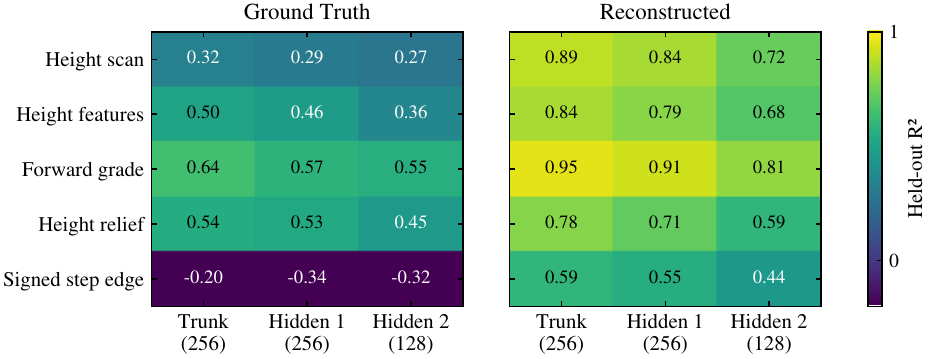}
	\caption{Held-out linear-probe $R^2$ for terrain-related targets on the Unitree Go1 at successive control-pathway layers. Reconstructed terrain quantities are more decodable than their ground-truth counterparts, with forward grade retained most strongly. Negative $R^2$ for the ground-truth signed step edge means that the linear probe performs worse than prediction by the test-set mean.}
	\label{fig:appendix_go1_terrain_probe}
\end{figure}

The Go1 probes broadly reproduce the humanoid pattern: reconstructed variables remain decodable near the action output. At Hidden~2, ground-truth projected gravity reaches $0.85$ on the Go1 versus $0.69$ on the humanoid, whereas reconstructed height-scan decodability is lower ($0.72$ versus $0.92$). Together with forward grade remaining the most decodable scalar terrain descriptor, this contrast is consistent with the Go1 organizing terrain evidence more around body attitude and the humanoid retaining a more explicit local-geometry representation. In both embodiments, reconstructed quantities persist through Hidden~2, supporting the conclusion that the training-only objective shapes information retained along the control pathway.

\FloatBarrier

\vfill

\end{document}